
\typeout{IJCAI--25 Instructions for Authors}


\documentclass{article}
\pdfpagewidth=8.5in
\pdfpageheight=11in

\usepackage{ijcai25}

\usepackage{natbib}  
\setcitestyle{numbers,square,comma}  

\usepackage{xcolor}  
\definecolor{deepgreen}{rgb}{0.0, 0.5, 0.0}

\usepackage{times}
\usepackage{soul}
\usepackage{url}
\usepackage{hyperref}
\hypersetup{
  colorlinks=true,  
  linkcolor=red,   
  citecolor=blue,    
  urlcolor=blue,    
}
\usepackage[utf8]{inputenc}
\usepackage[small]{caption}
\usepackage{graphicx}
\usepackage{amsmath}
\usepackage{amssymb}
\usepackage{amsthm}
\usepackage{booktabs}
\usepackage{algorithm}
\usepackage{algorithmic}
\usepackage[switch]{lineno}

\usepackage{multirow}
\usepackage{amsmath}
\usepackage{booktabs}
\usepackage{bm}  
\usepackage{makecell}  
\usepackage{arydshln}  
\usepackage{amssymb}
\newcommand{\maxnum}[1]{\bm{#1}}  
\newcommand{\secondmax}[1]{\underline{#1}}  
\newcommand{\thirdmax}[1]{\underline{\underline{#1}}} 


\urlstyle{same}



\newtheorem{theorem}{Theorem}
\newtheorem{lemma}{Lemma}
\newtheorem{remark}{Remark}

\newtheorem{definition}{Definition}
\newtheorem{case}{Case}





\pdfinfo{
/TemplateVersion (IJCAI.2025.0)
}

\title{Theoretical Insights into Fine-Tuning Attention Mechanism: \\Generalization and Optimization}


\author{
Xinhao Yao$^{1,2,5}$\footnote{Work done during the collaborative project with XiaoMi.}
\and
Hongjin Qian$^3$\and
Xiaolin Hu$^{1}$\and
Gengze Xu$^{1}$\and
Wei Liu$^{4}$\and\\
Jian Luan$^{4}$ \and
Bin Wang$^{4}$\and
Yong Liu$^{1,2,5}$\footnote{Corresponding Author.}\\
\affiliations
$^1$Gaoling School of Artificial Intelligence, Renmin University of China, Beijing, China\\
$^2$Beijing Key Laboratory of Research on Large Models and Intelligent Governance, $^3$BAAI, $^4$XiaoMi\\
$^5$Engineering Research Center of Next-Generation Intelligent Search and Recommendation, MOE
\emails
\{yaoxinhao021978, liuyonggsai\}@ruc.edu.cn
}

\begin{document}

\maketitle

\begin{abstract}
    Large Language Models (LLMs), built on Transformer architectures, exhibit remarkable generalization across a wide range of tasks. However, fine-tuning these models for specific tasks remains resource-intensive due to their extensive parameterization.  In this paper, we explore two remarkable phenomena related to the attention mechanism during the fine-tuning of LLMs (where $\mathbf{W}_q$, $\mathbf{W}_k$, and $\mathbf{W}_v$ denote the weights of the query, key, and value layers, respectively). The first phenomenon, termed \textit{“Unequal Importance of Attention Matrices”}, highlights the impact of fine-tuning different weight matrices. It shows that optimizing the $\mathbf{W}_v$ matrix yields significantly better performance than optimizing the $\mathbf{W}_k$ matrix. Fine-tuning only the $\mathbf{W}_q$ and $\mathbf{W}_v$ matrices is computationally efficient while delivering results comparable to, or even better than fine-tuning all three matrices ($\mathbf{W}_q$, $\mathbf{W}_k$, and $\mathbf{W}_v$). The second phenomenon, \textit{“Attention Matrices with Customized Learning Rate Lead to Better Convergence”}, emphasizes the importance of assigning distinct learning rates to these matrices. Specifically, a higher learning rate for the $\mathbf{W}_v$ matrix compared to $\mathbf{W}_q$ and $\mathbf{W}_k$ accelerates convergence and improves performance.
Building on these insights, we propose a new strategy that improves fine-tuning efficiency in terms of both storage and time. Experimental results on benchmark datasets validate the effectiveness of this approach, supporting our theoretical findings. Our analysis lays the theoretical groundwork for configuring and improving algorithms in LLMs fine-tuning\footnote{Extended version and code, are available at \url{https://github.com/chen123CtrlS/EfficientFT/}}.
\end{abstract}

\section{Introduction}
Large Language Models (LLMs) are often built on Transformer architectures \citep{vaswani2017attention} and possess a large number of parameters, enabling them to generalize across a broad range of general tasks \citep{vaswani2017attention,likhomanenko2021slimipl,touvron2021training,dosovitskiy2021image,min-etal-2022-noisy}. However, achieving optimal performance on specific tasks typically necessitates fine-tuning these pre-trained models. Despite the formidable capabilities of LLMs, the fine-tuning process is resource-intensive, requiring significant computational power, storage, and time due to the large scale of model parameters involved. Fine-tuning all the parameters of a large language model, known as full fine-tuning, is highly computationally expensive. To reduce
the computational cost, various parameter-efficient fine-tuning (PEFT) methods have been proposed \citep{ding2023parameter,houlsby2019parameter,Lester2021ThePO,prefix,lora}, which only fine-tune a small number of (extra) model parameters.

A fundamental component of transformers is the attention mechanism, particularly the interactions among the query matrix $\mathbf{W}_q$, the key matrix $\mathbf{W}_k$, and the value matrix $\mathbf{W}_v$. During the fine-tuning of LLMs involving the attention mechanism, two interesting phenomena have been observed: (1) \textit{Unequal Importance of Attention Matrices}—optimizing the $\mathbf{W}_v$ is pivotal for enhancing performance, significantly more so than adjustments to the $\mathbf{W}_k$, which exhibit limited impact on the outcomes. Additionally, fine-tuning only the $\mathbf{W}_q$ and $\mathbf{W}_v$ often yields results that are comparable to or surpass those achieved by fine-tuning all three matrices $\mathbf{W}_q$, $\mathbf{W}_k$, and $\mathbf{W}_v$, which also reduces the number of tunable attention parameters by approximately 1/3, offering computational benefits (Section \ref{Generalization Analysis}). (2) \textit{Attention Matrices with Customized Learning Rate Lead to Better Convergence}—using the same
learning rate for $\mathbf{W}_q\&\mathbf{W}_k$ and $\mathbf{W}_v$ is not optimal for efficient convergence. In fact, it is essential to apply distinct learning rates for the $\mathbf{W}_q$, $\mathbf{W}_k$, and $\mathbf{W}_v$ components to ensure optimal fine-tuning performance. Specifically, the learning rate for $\mathbf{W}_v$ should generally be higher than that for $\mathbf{W}_q$ and $\mathbf{W}_k$ to facilitate efficient convergence (Section \ref{Optimization}).

While certain empirical guidelines, such as the original Low-Rank Adaptation (LoRA) \citep{lora}, explore which weight matrices in transformers are suitable for the application of LoRA, comprehensive theoretical analyses of these phenomena are still limited. This includes aspects such as selecting appropriate weight types for fine-tuning and optimizing learning rate settings. Reflecting on the attention equation itself (Section \ref{pre}): (1) In linear algebra, two matrices multiplied without an intermediate activation can be equivalent to a single matrix. Some studies 
\citep{noci2022signal,bao2024selfattentionnetworkslocalizeqkeigenspectrum} 
often treat $\mathbf{W}_q$ and $\mathbf{W}_k$ as a single unit ($\mathbf{W}_{qk}=\mathbf{W}_q\mathbf{W}_k^T$), however, the benefits of fine-tuning $\mathbf{W}_q\&\mathbf{W}_v$ alone have yet to be further clarified. (2) Considering the scenario where the values of $\mathbf{W}_q$, $\mathbf{W}_k$, and $\mathbf{W}_v$ approach zero, the gradients of $\mathbf{W}_q\&\mathbf{W}_k$ tend to diminish towards zero. In contrast, the gradient of $\mathbf{W}_v$ remains non-zero due to the influence of softmax normalization.
Driven by the above motivations, this paper delves into the issue from the following two perspectives.

 $\bullet$ \textbf{Generalization: advantages of fine-tuning $\mathbf{W}_q\&\mathbf{W}_v$ over $\mathbf{W}_q,\mathbf{W}_k,\mathbf{W}_v$ together.} We perform a thorough theoretical analysis to demonstrate the advantages. To be more specific, we employ information-theoretic approaches \citep{xu2017information,polyanskiy2019lecture,wang2022facets,zhu2024asymmetrylowrankadaptersfoundation} to establish the generalization bounds of fine-tuning pre-trained models with attention mechanism (See {\hypersetup{linkcolor=black}\hyperref[theorem1]{\textbf{Theorem 1}}} for details). This indicates that fine-tuning $\mathbf{W}_q\&\mathbf{W}_v$ instead of $\mathbf{W}_q,\mathbf{W}_k,\mathbf{W}_v$ reduces the number of parameters,  while improving generalization bounds and potentially providing memory benefits.

$\bullet$ \textbf{Optimization: convergence analysis of attention mechanism with varying learning rate settings.} To further investigate the aforementioned phenomena, we examine the optimization process of the attention mechanism. First,  we discuss the learning dynamics in transformers in {\hypersetup{linkcolor=black}\hyperref[case1]{\textbf{Case 1}}}, suggesting that $\mathbf{W}_v$ may experience instances of inefficient learning during downstream task fine-tuning. This naturally leads to the hypothesis that accelerating the learning of $\mathbf{W}_v$ in the early stages could potentially induce $\mathbf{W}_k$ and $\mathbf{W}_q$ to begin learning earlier. Additionally, by using scaling arguments for large width-$n$ networks \citep{yang2022tensorprogramsvtuning,loraplus}, we illustrate ({\hypersetup{linkcolor=black}\hyperref[theorem2]{\textbf{Theorem 2}}}) that the feature learning of attention mechanism is efficient when the learning rate for $\mathbf{W}_v$ should be generally much larger than that of $\mathbf{W}_q\&\mathbf{W}_k$ in fine-tuning.
    
Building on  our experimental and theoretical insights, one can develop new algorithms to improve the effectiveness (e.g., storage, and time) of fine-tuning. Experimental results for our strategy (in Section \ref{Algorithm})  on benchmark datasets \citep{wang2018glue} and open source pre-trained models \citep{liu2019robertarobustlyoptimizedbert,AI_Meta2024}
verify that the method can visibly influence fine-tuning efficiency. We do not make direct comparisons with various parameter-efficient fine-tuning methods, as our strategy is primarily intended to demonstrate how theoretical analysis can effectively guide experimental procedures.

\textbf{A summary of the main theoretical analyses.} According to the traditional statistical learning viewpoint, performance can be defined by the sum of optimization error and generalization error. Our theoretical analyses in Sections \ref{Generalization Analysis} and \ref{Optimization} correspond to generalization and optimization, respectively. In Section \ref{Generalization Analysis} (generalization, storage-friendly), we give {\hypersetup{linkcolor=black}\hyperref[theorem1]{\textbf{Theorem 1}}} (information-theoretic generalization bounds), showing that with the same $r$ value, fine-tuning $\mathbf{W}_q\&\mathbf{W}_v$ consistently achieves results comparable to or even surpassing those of fine-tuning $\mathbf{W}_q,\mathbf{W}_k,\mathbf{W}_v$. This reduces the number of parameters for the same $r$, while improving generalization bounds and potentially providing memory benefits. In Section \ref{Optimization} (optimization, time-friendly), we discuss the learning dynamics in fine-tuning attention mechanism, and we illustrate ({\hypersetup{linkcolor=black}\hyperref[theorem2]{\textbf{Theorem 2}}}) that the feature learning of attention mechanism is efficient when the learning rate for $\mathbf{W}_v$ should be generally much larger than that of $\mathbf{W}_q\&\mathbf{W}_k$ in fine-tuning. Building on our experimental and theoretical insights, one can develop new algorithms to improve the effectiveness (e.g., storage, and time) of fine-tuning.

\section{Preliminaries and Background}\label{pre}
In this section, we first describe the core components of our study by reviewing some basic notations. The transformer model \citep{vaswani2017attention} serves as the backbone of most state-of-the-art pre-trained models. For clarity, we briefly outline its key equations, focusing on the self-attention function, as follows.

\textbf{Self-attention.} Given a sequence of $m$ vectors $\mathbf{C} \in \mathbb{R}^{m \times d_{in}}$ over which we would like to perform attention and a query vector $\mathbf{x} \in \mathbb{R}^{d_{in}}$, that is, the input is [$\mathbf{C}, \mathbf{x}$] $\in \mathbb{R}^{(m+1)\times d_{in}}$. Conventional attention can be expressed as\footnote{For simplicity, we focus on the last vector of input in a single-head self-attention. Our analysis is readily generalizable to multi-head self-attention.}:
\begin{align}
&\text{Attn}(\mathbf{x} \mathbf{W}_q, \mathbf{C} \mathbf{W}_k, \mathbf{C} \mathbf{W}_v)\nonumber \\&=
\text{softmax}\left( \frac{\mathbf{x} \mathbf{W}_q \mathbf{W}_k^T \mathbf{C}^T}{\sqrt{d_{out}}} \right) \mathbf{CW}_v,\label{eq1}
\end{align}
where $\mathbf{W}_q$,$\mathbf{W}_k$,$\mathbf{W}_v \in\mathbb{R}^{d_{in}\times d_{out}}$ are query, key and value (projection) matrices.

\textbf{A unified framework for parameter-efficient fine-tuning.} \label{unified framework}
Building on \citet{he2022towards}, we consider a unified framework that establishes connections among various PEFT methods. Specifically, we reinterpret these methods as modifications applied to specific hidden states within pre-trained models, the composition function can be written as:
\begin{equation}
    \mathbf{h} \leftarrow l_1\mathbf{h}+l_2\Delta \mathbf{h},\label{eq2}
\end{equation}
where $l_1,l_2$ are coefficients, $\mathbf{h}$ is denoted as  the hidden representation to be directly modified and $\Delta \mathbf{h}$ is a modification vector. Moreover, $\mathbf{h}$ and $\mathbf{x}$ can represent the attention output and input respectively. Here, we present two special cases:

 \textbf{LoRA.} LoRA \citep{lora} injects trainable low-rank matrices into transformer layers to approximate the weight updates. Instead of directly adjusting the full weight matrix $\mathbf{W}\in \mathbb{R}^{d_{in}\times d_{out}}$, LoRA represents its update with a low-rank decomposition $\mathbf{W}+\Delta\mathbf{W}=\mathbf{W}+\mathbf{AB}$, where $\mathbf{A}\in \mathbb{R}^{d_{in}\times r},\mathbf{B}\in \mathbb{R}^{r\times d_{out}}$ are tunable parameters. For a specific input $\mathbf{x}$, LoRA modifies the projection output $\mathbf{h}$ as (where $s \geq 1$ is a tunable scalar hyperparameter):
\begin{align}
    \mathbf{h} \leftarrow \mathbf{h}+s\Delta \mathbf{h},\ \ \ \Delta \mathbf{h}:= \mathbf{xAB}.\label{eq3}
\end{align}

 \textbf{Prefix tuning.} Prefix tuning \citep{prefix} prepends $r$ tunable prefix vectors to the keys and values of the attention mechanism at every layer. Specifically, two sets of prefix vectors $\mathbf{P}_k, \mathbf{P}_v \in \mathbb{R}^{r \times d_{out}}$ are concatenated with the original key $\mathbf{CW}_k$ and value $\mathbf{CW}_v$, attention is then applied to the prefixed keys and values as\footnote{Without loss of generalization, we ignore the softmax scaling factor for ease of notation.}:
    \begin{align}
    &\mathbf{h} \leftarrow (1-\alpha(\mathbf{x})\mathbf{h}+\alpha(\mathbf{x})\Delta \mathbf{h},\nonumber \\ &\Delta \mathbf{h}:= \text{softmax}(\mathbf{xW}_q\mathbf{P}_k^T)\mathbf{P}_v \triangleq \text{softmax}(\mathbf{xA})\mathbf{B},\label{eq4}
    \end{align}
    where $\alpha(\mathbf{x})=\frac{\sum_i\text{exp}(\mathbf{xW}_q\mathbf{P}_k^T)_i}{\sum_i\text{exp}(\mathbf{xW}_q\mathbf{P}_k^T)_i+\sum_j\text{exp}(\mathbf{xW}_q\mathbf{W}_k^T\mathbf{C}^T)_j}$  is a scalar that represents the sum of normalized attention weights on the prefixes. 
    We derive a detailed equivalent form of Prefix tuning to connect it with LoRA in  Appendix \ref{connection}.

\begin{remark}By defining $\mathbf{A}=\mathbf{W}_q\mathbf{P}_k^T,\mathbf{B}=\mathbf{P}_v$ in Eq.(\ref{eq4}), we can establish a connection with LoRA in Eq.(\ref{eq3}). Notably, if we replace the softmax attention with linear attention here, the two are equivalent to some extent. Intuitively, in the attention mechanism, $\mathbf{A}\ (\mathbf{W}_q\mathbf{P}_k^T)$ is responsible for generating attention scores, while $\mathbf{B}\ (\mathbf{P}_v)$ utilizes these attention scores to produce the target content. Therefore, during fine-tuning, query, key, and value are likely to exhibit varying degrees of importance. This may also provide theoretical insights for recent works \citep{zhu2024asymmetrylowrankadaptersfoundation,hayou2024impactinitializationlorafinetuning}, which empirically observed an asymmetry where the project-down matrix $\mathbf{A}$ is responsible for extracting features from the input, while the project-up matrix $\mathbf{B}$ utilizes these features to generate the desired output in LoRA.
    
\end{remark}
\textbf{$\Theta$ Notation.} In our convergence analysis, we use standard asymptotic notation to describe behavior as the width $n$ grows, following conventions in \citep{yang2022tensorprogramsvtuning,loraplus}. Given sequences $c_n \in \mathbb{R}$ and $d_n \in \mathbb{R}^+$, we write $c_n = O(d_n)$ and $c_n = \Omega(d_n)$ to mean $c_n < \kappa d_n$ or $c_n > \kappa d_n$, respectively, for some constant $\kappa > 0$. We denote $c_n = \Theta(d_n)$ when both $c_n = O(d_n)$ and $c_n = \Omega(d_n)$ hold, implying that $c_n$ and $d_n$ grow at comparable rates. For vector sequences $c_n = (c_n^i)_{1 \leq i \leq k} \in \mathbb{R}^k$ (for some $k > 0$), we write $c_n = O(d_n)$ when $c_n^i = O(d_n^i)$ for all $i \in [k]$, and analogous notation applies for other asymptotic bounds. Finally, when the sequence $c_n$ is a vector of random variables, convergence is understood to refer to convergence in the second moment (i.e., $L_2$ norm).

\section{Advantages and Generalization Analysis}\label{Generalization Analysis}
In this section, we show our first interesting  observation (\textit{Unequal Importance of Attention Matrices}) in fine-tuning the attention mechanism and the storage benefits of fine-tuning only $\mathbf{W}_q$ and $\mathbf{W}_v$ (Section \ref{phenomenon1}). Next, we give mutual information based generalization bounds of fine-tuning only $\mathbf{W}_q$ and $\mathbf{W}_v$ (Section \ref{bounds}), providing a better generalization error.
\subsection{Empirical Advantages}\label{phenomenon1}
To explore the \textit{Unequal Importance of Attention Matrices}, we focus our study on \textbf{adapting only the attention weights} for downstream tasks, while freezing the other modules to ensure simplicity and parameter efficiency. Furthermore, we investigate the impact of adapting different types of attention weight matrices in a Transformer, as outlined below. We present our empirical results using LoRA to fine-tune a set of language models (Roberta-base~\citep{liu2019robertarobustlyoptimizedbert} and Llama3.1-8b \citep{AI_Meta2024}) across various benchmarks \citep{wang2018glue}. Further details on the experimental setup and additional empirical results are in Appendix \ref{Empirical Details}.

Table \ref{table1} provides a detailed comparison of the impact of fine-tuning different weight matrices ($\mathbf{W}_q,\mathbf{W}_k,\mathbf{W}_v$) across various rank values $r$ and weight update strategies in LoRA fine-tuning on tasks like SST2, QNLI, QQP, and MNLI. We can see a clear trend where solely updating the $\mathbf{W}_v$ matrix outperforms just learning the $\mathbf{W}_q,\mathbf{W}_k$ matrix. Interestingly, the combination of fine-tuning both $\mathbf{W}_q$ and $\mathbf{W}_v$ often leads to performance that matches or even exceeds that achieved by fine-tuning all three matrices $\mathbf{W}_q,\mathbf{W}_k,$ and $\mathbf{W}_v$. This pattern is consistently observed across various tasks and rank values, further emphasizing the importance of these two matrices over $\mathbf{W}_k$ during fine-tuning. 

\begin{table*}[ht]
\centering
\begin{tabular}{p{1cm}ccccccc}
\toprule
\textbf{} & \textbf{Weight Type} & $\mathbf{W}_q$& $\mathbf{W}_k$& $\mathbf{W}_v$ & $\mathbf{W}_q,\mathbf{W}_k$ & $\mathbf{W}_q, \mathbf{W}_v$ & $\mathbf{W}_q, \mathbf{W}_k, \mathbf{W}_v$  \\
\midrule
\multirow{3}{*}{\textbf{SST2}(R)} 
 & $r=4$ & $0.904$ & $0.902$ & $0.913$ &$\thirdmax{0.919}$ &$\maxnum{\secondmax{0.920}}$&$\secondmax{\maxnum{0.920}}$   \\
 & $r=8$ & $0.914$ & $0.906$ & $\thirdmax{0.918}$ &$0.915$ &$\secondmax{0.919}$&$\maxnum{0.922}$  \\
 & $r=16$ & $0.907$ & $0.905$ & $0.916$ &$\thirdmax{0.917}$ &$\secondmax{0.921}$&$\maxnum{0.923}$   \\
\midrule
\multirow{3}{*}{\textbf{QNLI}(R)} 
 & $r=4$ & $0.854$ & $0.835$ & $\thirdmax{0.878}$ & $0.866$ & $\maxnum{0.888}$ & $\secondmax{0.887}$  \\
 & $r=8$ & $0.857$ & $0.841$ & $\thirdmax{0.875}$ & $0.866$ & $\secondmax{0.889}$ & $\maxnum{0.895}$\\
 & $r=16$ & $0.854$ & $0.840$ & $\thirdmax{0.875}$ & $0.867$ & $\secondmax{\maxnum{0.890}}$& $\secondmax{\maxnum{0.890}}$  \\
  \midrule
\multirow{3}{*}{\textbf{QQP}(R)} 
 & $r=4$ & $0.812$ & $0.804$ & $\thirdmax{0.828}$ & $0.823$ & $\secondmax{0.838}$ & $\maxnum{0.843}$  \\
 & $r=8$ & $0.812$ & $0.806$ & $\thirdmax{0.828}$ & $0.823$ & $\secondmax{0.840}$ & $\maxnum{0.844}$\\
 & $r=16$ & $0.812$ & $0.804$ & $\thirdmax{0.831}$ & $0.823$ & $\secondmax{0.839}$& $\maxnum{0.844}$  \\
    \\
 \multirow{2}{*}{\textbf{QQP}(L)} 
 & $r=8$ & $0.864 $ & $0.845 $ & $0.865 $ & $\thirdmax{0.866} $ & $\secondmax{\maxnum{0. 874}}$ & $\secondmax{\maxnum{0. 874}}$\\
  & $r=16$ & $0. 864$ & $0.845 $ & $\thirdmax{0.869 }$ & $0.867 $ & $\secondmax{\maxnum{0. 874}}$ & $\secondmax{\maxnum{0.874 }}$\\
 \midrule
 \multirow{3}{*}{\textbf{MNLI}(R)} 
 & $r=4$ & $0.748$ & $0.733$ & $\thirdmax{0.807}$ & $0.772$ & $\secondmax{0.820}$ & $\maxnum{0.828}$  \\
 & $r=8$ & $0.749$ & $0.733$ & $\thirdmax{0.809}$ & $0.778$ & $\secondmax{0.820}$ & $\maxnum{0.827}$\\
 & $r=16$ & $0.750$ & $0.734$ & $\thirdmax{0.810}$ & $0.780$ & $\secondmax{0.824}$& $\maxnum{0.828}$  \\
   \\
 \multirow{2}{*}{\textbf{MNLI}(L)} 
 & $r=8$ & $0.802$ & $0.660$ & $\thirdmax{0.862}$ & $0.814$ & $\secondmax{\maxnum{0.871}}$ & $\secondmax{\maxnum{0.871}}$\\
  & $r=16$ & $0.803$ & $0.663$ & $\thirdmax{0.863}$ & $0.815$ & $\secondmax{\maxnum{0.871}}$ & $\secondmax{\maxnum{0.871}}$\\
\bottomrule
\end{tabular}
\caption{Performance comparison across different $r$ values and weight types. To enable a fair comparison, we initialize the weights for all tasks with the original pretrained weights. Test accuracy of Roberta-base (R) and Llama3.1-8b (L) fine-tuning on SST2, QNLI, QQP, MNLI, with sequence length T = 128 and half precision (FP16).  All values are averaged over 3 random seeds.  The best result is shown in \textbf{bold}, the second best result is shown in $\secondmax{\text{underline}}$, and the third best result is shown with $\text{double}\  \thirdmax{\text{underlines}}$. }
\label{table1}
\end{table*}

\textbf{Computational benefits.} Here, we show that the reduced amount of adapted parameters by (roughly) $1/3$ provides computational gains. The key benefit of parameter-efficient method is to save memory during training, storage and communication \citep{lialin2023scalingscaleupguide}. Fine-tuning $\mathbf{W}_q\&\mathbf{W}_v$ alone as opposed to both $\mathbf{W}_q\&\mathbf{W}_v$ and $\mathbf{W}_k$ reduces the number of parameters by $1/3$, when the dimensions of $\mathbf{W}_q$, $\mathbf{W}_k$, and $\mathbf{W}_v$ are the same. Moreover, we provide a discussion in Appendix \ref{discussions} about \textbf{why fine-tune $\mathbf{W}_q\&\mathbf{W}_v$ instead of $\mathbf{W}_k\&\mathbf{W}_v$.}

\subsection{Information-Theoretic Generalization Bounds}\label{bounds}
In the previous part, we establish that the \textit{Unequal Importance of Attention Matrices} among $\mathbf{W}_q$, $\mathbf{W}_k$, and $\mathbf{W}_v$ during fine-tuning. Some studies \citep{noci2022signal,bao2024selfattentionnetworkslocalizeqkeigenspectrum} often treat $\mathbf{W}_q$ and $\mathbf{W}_k$ as a single unit ($\mathbf{W}_{qk}=\mathbf{W}_q\mathbf{W}_k^T$), however, the benefits of fine-tuning $\mathbf{W}_q\&\mathbf{W}_v$ alone, rather than fine-tuning $\mathbf{W}_q\&\mathbf{W}_v$, and $\mathbf{W}_k$ together, have yet to be further clarified. Therefore, we will further analyze this issue from an information-theoretic generalization perspective.

Recently, information-theoretic generalization bounds \citep{xu2017information,russo2019dataexploration,steinke2020reasoning,wang2022facets} have been introduced to analyze the expected generalization error of learning algorithms. A key benefit of these bounds is that they depend not only on the data distribution but also on the specific algorithm, making them an ideal tool for studying the generalization behavior of models trained using particular algorithms.

\textbf{Generalization error.}\label{error} 
We let $\mathcal{Z}=\mathcal{X}\times \mathcal{Y}$ be the instance space and $\mu$ be an unknown distribution on $\mathcal{Z}$, specifying random variable $Z$. Here, $\mathcal{X}$ denotes the feature space and $\mathcal{Y}$ is the label space.  Suppose one observes a training set $S_N \triangleq (Z_1,...,Z_N)\in \mathcal{Z}^N$, with $N$ i.i.d. training examples drawn from $\mu$. In the information-theoretic analysis framework, we let $\mathcal{W}$ be the space of hypotheses related to the model, and a stochastic learning algorithm $\mathcal{A}$ which takes the training examples $S_N$  as its input and outputs a hypothesis $W\in \mathcal{W}$ according to some conditional distribution $Q_{W|S_N}$. Given a loss function $\ell:\mathcal{W} \times \mathcal{Z} \rightarrow \mathbb{R}^+$, where $\ell(w, Z)$ measures the “unfitness” or “error” of any $Z\in \mathcal{Z}$ with respect to a hypothesis $w\in \mathcal{W}$. We take $\ell$ as a continuous function and assume that $\ell$ is differentiable almost everywhere with respect to $w$.
The goal of learning is to find a hypothesis $w$ that minimizes the population risk, and for any $w\in \mathcal{W}$,
the population risk is defined as $L_{\mu}(w) \triangleq \mathbb{E}_{Z\sim\mu}[\ell(w, Z)].$  However, since only can partially observe $\mu$ via the sample $S_N$, we instead turn to use the empirical risk, defined as $L_{S_N}(w) \triangleq \frac{1}{N} \sum_{i=1}^{N} \ell(w, Z_i)$. Then the expected generalization error of  $\mathcal{A}$ is  defined as 
\begin{equation*}
    \widetilde{\text{error}}(\mathcal{A}) \triangleq \mathbb{E}_{W,S_N}[L_{\mu}(W) - L_{S_N}(W)],
\end{equation*}
where the expectation is taken over $(S_N,W)\sim\mu^N\otimes Q_{W|S_N}$. 

Consider the following variations of fine-tuning algorithms: tuning both $\mathbf{W}_k$ and $\mathbf{W}_q\&\mathbf{W}_v$ matrices (as in classic attention mechanism in fine-tuning), tuning only $\mathbf{W}_q\&\mathbf{W}_v$:

\begin{definition}[Fine-tuning algorithms]\label{def1}
Recalling {\hypersetup{linkcolor=black}\hyperref[unified framework]{\textbf{A unified framework for parameter-efficient fine-tuning}}}, we can model the fine-tuning process of the attention mechanism as $\mathbf{h}+\Delta\mathbf{h}=\mathbf{x}\mathbf{W} + \mathbf{x}\Delta\mathbf{W}$. Let $\mathbf{W}=\{\mathbf{W}_i\}_{i=1}^L$ be a set of abstract parameter matrices related to a pretrained model, where each $\mathbf{W}_i$ is associated with the parameters $\mathbf{W}_q^i,\mathbf{W}_k^i,\mathbf{W}_v^i$. The indices $1,...,L$ represent the layers of the model where these parameters are to be fine-tuned. Let $\mathcal{I}\subseteq \{1,...,L\}$ denote the subset of layers selected for fine-tuning. Given a fine-tuning training set $S_N$, let $r$ denote the chosen lora-rank, and assume each tuned parameter is quantized to $q$ bits. Define the following algorithmic frameworks for selecting an adaptation $\Delta \mathbf{W}=\{\Delta \mathbf{W}_i\}_{i=1}^L$ (with other details left open to choice). 
(\textcolor{purple}{1}) $\mathcal{A}_{QKV}$: For each $i\in\mathcal{I}$, optimize $\{\mathbf{W}_q^i,\mathbf{W}_k^i,\mathbf{W}_v^i\}_{i\in\mathcal{I}}$ to fit the data $S_N$. 
(\textcolor{purple}{2}) $\mathcal{A}_{QV}$: For each $i\in\mathcal{I}$, optimize $\{\mathbf{W}_q^i,\mathbf{W}_v^i\}_{i\in\mathcal{I}}$ to fit the data $S_N$.
\end{definition}
Then we use the information-theoretic generalization framework to bound the generalization error:
\begin{theorem}[Generalization bounds on adapting $\mathbf{W}_q\&\mathbf{W}_v$ and/or $\mathbf{W}_k$]\label{theorem1} Consider the algorithms of {\hypersetup{linkcolor=black}\hyperref[def1]{\textbf{Definition 1}}}. Assume the loss $\ell(\mathbf{W}, Z)$ is R-subGaussian under $(\Delta \mathbf{W}, {{Z}}) \sim P_{\Delta \mathbf{W} \mid \mathbf{W}} \times \mu$. Then (See Appendix \ref{pf:t1} for a proof),
\begin{align*}
    \widetilde{\text{error}}(\mathcal{A}_{QV}) \leq \sqrt{\frac{4R^2}{N}qr\sum_{i\in\mathcal{I}}(d_{in}+d_{out})},\\
    \widetilde{\text{error}}(\mathcal{A}_{QKV}) \leq \sqrt{\frac{6R^2}{N}qr\sum_{i\in\mathcal{I}}(d_{in}+d_{out})},
\end{align*}
where $\mathbf{W}_q^i,\mathbf{W}_k^i,\mathbf{W}_v^i\in \mathbb{R}^{d_{in}\times d_{out}}$. 
\end{theorem}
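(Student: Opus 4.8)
The plan is to use the standard information-theoretic generalization bound of \citet{xu2017information}, which states that for an $R$-subGaussian loss, the expected generalization error of a learning algorithm is controlled by the mutual information between the output hypothesis and the input training set:
\begin{equation*}
    \widetilde{\text{error}}(\mathcal{A}) \leq \sqrt{\frac{2R^2}{N} I(W; S_N)}.
\end{equation*}
Since the pretrained weights $\mathbf{W}$ are fixed before fine-tuning and only the adaptation $\Delta\mathbf{W}$ depends on the data, the relevant quantity reduces to $I(\Delta\mathbf{W}; S_N)$. Thus the entire problem collapses to upper-bounding this mutual information for each of the two algorithmic frameworks in Definition~\ref{def1}.

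First I would bound the mutual information by the entropy of the learned adaptation, using $I(\Delta\mathbf{W}; S_N) \leq H(\Delta\mathbf{W})$. The key observation is a \emph{parameter-counting} argument: under LoRA with rank $r$, each adapted matrix $\Delta\mathbf{W}_i^{\bullet} = \mathbf{A}_i^{\bullet}\mathbf{B}_i^{\bullet}$ is parameterized by $\mathbf{A}_i^{\bullet}\in\mathbb{R}^{d_{in}\times r}$ and $\mathbf{B}_i^{\bullet}\in\mathbb{R}^{r\times d_{out}}$, giving $r(d_{in}+d_{out})$ free scalar parameters per adapted matrix. With each scalar quantized to $q$ bits, each adapted matrix contributes at most $qr(d_{in}+d_{out})$ bits of entropy. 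For $\mathcal{A}_{QV}$ there are two adapted matrices ($\mathbf{W}_q$ and $\mathbf{W}_v$) per layer $i\in\mathcal{I}$, whereas for $\mathcal{A}_{QKV}$ there are three. Summing over the selected layers $\mathcal{I}$ and converting from bits (base-2 entropy) to nats yields
\begin{equation*}
    I(\Delta\mathbf{W}; S_N) \leq 2\,qr\sum_{i\in\mathcal{I}}(d_{in}+d_{out}) \quad\text{and}\quad 3\,qr\sum_{i\in\mathcal{I}}(d_{in}+d_{out})
\end{equation*}
respectively (up to the $\ln 2$ or factor-of-constant bookkeeping absorbed into the stated constants $4R^2$ and $6R^2$).

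Plugging these entropy bounds into the subGaussian generalization inequality immediately gives the two claimed bounds: the factor $2$ versus $3$ in the parameter count propagates to $4R^2$ versus $6R^2$ under the square root (since $2 \cdot 2 = 4$ and $2 \cdot 3 = 6$ from the $2R^2$ prefactor times the per-matrix multiplicity). The main obstacle, and the point requiring the most care, is the justification of the discrete entropy bound $H(\Delta\mathbf{W}) \leq (\text{number of parameters})\times q$: this relies on the quantization assumption so that each $\Delta\mathbf{W}$ takes values in a finite set, and on the fact that the entropy of a discrete random variable taking at most $2^{qk}$ values is at most $qk$ bits. I would state this quantization assumption explicitly and verify that the mutual information bound $I \leq H$ holds because mutual information is bounded by the marginal entropy of either argument. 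A secondary subtlety is ensuring that fixing the pretrained weights genuinely removes their contribution to the information term, which follows because $\mathbf{W}$ is independent of $S_N$ (chosen before seeing the fine-tuning data) and the data-processing/chain-rule decomposition of mutual information isolates the dependence through $\Delta\mathbf{W}$ alone.
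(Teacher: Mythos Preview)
Your proposal is correct and follows essentially the same route as the paper's proof: invoke the Xu--Raginsky subGaussian bound, reduce from the full hypothesis to $\Delta\mathbf{W}$ via the independence of the pretrained weights from $S_N$ together with the data-processing inequality and chain rule (the paper packages this as a separate Lemma~2), then bound $I(\Delta\mathbf{W};S_N)\leq H(\Delta\mathbf{W})$ by counting the $q$-bit-quantized LoRA parameters, which gives the factors $2$ and $3$ per layer for $\mathcal{A}_{QV}$ and $\mathcal{A}_{QKV}$ respectively. The paper likewise does not explicitly reconcile bits versus nats, so your hedge on that point mirrors the original.
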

\begin{remark}[Discussion of the advantages] We can evaluate the empirical risk ($L_{S_N}$) by observing the model's performance on the dataset we have. If the generalization error ({\hypersetup{linkcolor=black}\hyperref[theorem1]{\textbf{Theorem 1}}}) is determined, it is at least possible to estimate the population risk ($L_{\mu}$). This generalization bound increases with the number of parameters being tuned, which grows as a function of $r$ and the dimensions of the parameter matrices. In Table \ref{table1}, we know that with the same $r$ value, fine-tuning $\mathbf{W}_q\&\mathbf{W}_v$ consistently achieves results comparable to or even surpassing those of fine-tuning $\mathbf{W}_q,\mathbf{W}_k,\mathbf{W}_v$. This  reduces the number of parameters for the same $r$,
while \textbf{improving generalization bounds and potentially providing memory benefits}.
\end{remark}

\section{Convergence Analysis in Optimization}\label{Optimization}
In Section \ref{Generalization Analysis}, we have already demonstrated the generalization performance of the attention mechanism during fine-tuning. Our focus will now shift toward optimizing convergence efficiency. Some optimization observations have also been reported in previous works \citep{lora,li2023how,he2024sparsematrixlargelanguage}, such as: \citet{li2023how} provide theoretical analyses of learning dynamics in transformers and observes a roughly two-stage process of self-attention. Meanwhile, \citet{he2024sparsematrixlargelanguage} empirically show that the attention mechanism, particularly the value vector, stores the largest amount of memories and has the greatest influence during fine-tuning. However, there is not yet a satisfactory explanation for why this phenomenon occurs or how it can be effectively leveraged. In this section, we will explore these questions in more depth.

\subsection{An Insight into Inefficient Learning}\label{insight}
We first discuss the optimization process of attention mechanism in the following simple case.
\begin{case}\label{case1} Omitting the scale factor for qualitative analysis in Eq.(\ref{eq1}), we obtain:
\begin{equation*}
     \text{Attn}(\mathbf{x} \mathbf{W}_q, \mathbf{C} \mathbf{W}_k, \mathbf{C} \mathbf{W}_v)=\text{softmax}\left(\mathbf{x} \mathbf{W}_q\mathbf{W}_k^T\mathbf{C}^T \right) \mathbf{C} \mathbf{W}_v.
\end{equation*}
Intuitively, if $\mathbf{W}_q,\mathbf{W}_k,\mathbf{W}_v$ are initialized as random matrices close to zero and trained simultaneously, then in the initial step, $\nabla_{\mathbf{W}_k} L (\nabla_{\mathbf{W}_q} L)$ contains the term $\mathbf{W}_q (\mathbf{W}_k)$,  which is close to 0. By contrast, $\nabla_{\mathbf{W}_v} L$ contains the softmax-normalized attention weights. Therefore, during the initial steps (in training), $\mathbf{W}_v$ intuitively grows at a much faster rate than $\mathbf{W}_k(\mathbf{W}_q)$.
\end{case}
The work of \citep{li2023how} empirically exhibits {\hypersetup{linkcolor=black}\hyperref[case1]{\textbf{Case 1}}} with an approximately two-stage phenomenon: (\textcolor{purple}{1}) In stage 1 (initial steps), the norms of $\mathbf{W}_k$ and $\mathbf{W}_q$ remain close to zero across all layers, while the norm of 
$\mathbf{W}_v$ increases significantly, accompanied by rapid changes in its orientation. (\textcolor{purple}{2}) In stage 2, the norms of $\mathbf{W}_k$ and $\mathbf{W}_q$ begin to grow significantly, though much later than the $\mathbf{W}_v$ matrices. Briefly, in this case, $\mathbf{W}_v$ reaches a certain level of learning during training before $\mathbf{W}_k$ and $\mathbf{W}_q$ begin to learn. This suggests that when fine-tuning the model for downstream tasks, there may also be instances of inefficient learning in $\mathbf{W}_v$. Additionally, is there a fine-tuning strategy that could facilitate more effective learning for downstream tasks? \textbf{For instance, accelerating the learning of $\mathbf{W}_v$
  in the early stages could potentially induce earlier learning in $\mathbf{W}_k$
  and $\mathbf{W}_q$}.

Next, we present the second interesting phenomenon \textit{Attention Matrices with Customized Learning Rate Lead to Better Convergence}. We use the General Language Understanding Evaluation (GLUE, \citep{wang2018glue}) to evaluate the fine-tuning performance of different fine-tuning strategies, which consists of several language tasks that evaluate the understanding capabilities of language models. Using LoRA, we fine-tune Roberta-base from the RoBERTa family \citep{liu2019robertarobustlyoptimizedbert} and Llama3.1-8b \citep{AI_Meta2024} on MNLI, QQP, QNLI, and SST2 tasks with varying
learning rates $(\eta_{QK} ,\eta_{V})$ to identify the optimal combination. Other empirical details are provided in Appendix \ref{Empirical Details} and we evaluate the LLaMA3.1-8B model on more complex benchmarks in Appendix \ref{more cha}. We present our empirical results using LoRA to fine-tune language models, as visualized in the heatmaps (Figure \ref{figure1} and Figure \ref{figure2}). 
\begin{figure}[ht]
  \centering
    \begin{minipage}[b]{0.23\textwidth}
        \centering
        \includegraphics[width=\textwidth]{ 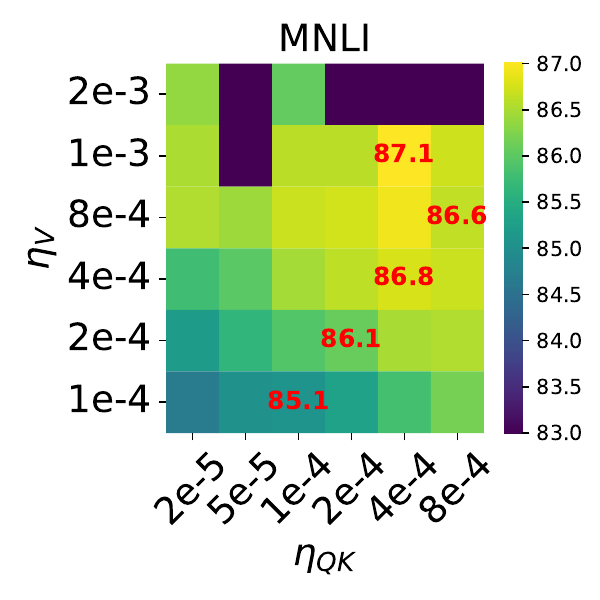}
    \end{minipage}
    \begin{minipage}[b]{0.23\textwidth}
        \centering
        \includegraphics[width=\textwidth]{ 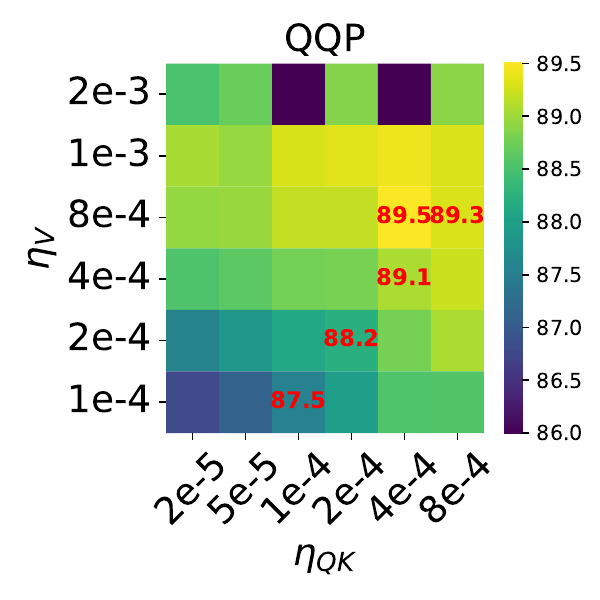}
    \end{minipage}
    \begin{minipage}[b]{0.23\textwidth}
        \centering
     \includegraphics[width=\textwidth]{ 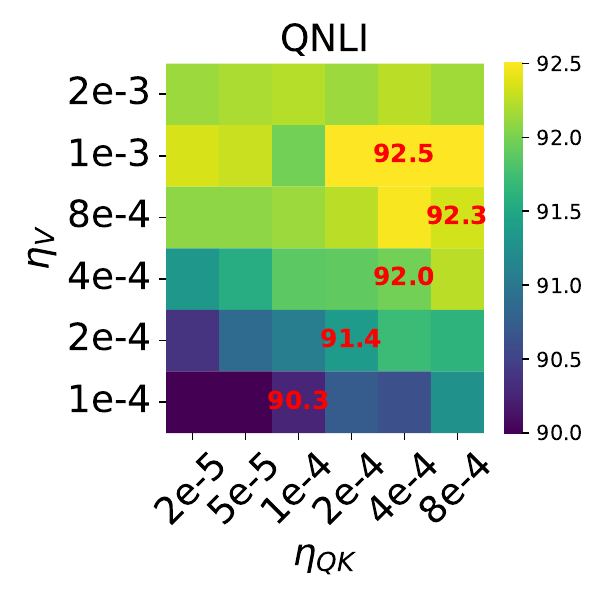}
    \end{minipage}
    \begin{minipage}[b]{0.23\textwidth}
        \centering
        \includegraphics[width=\textwidth]{ 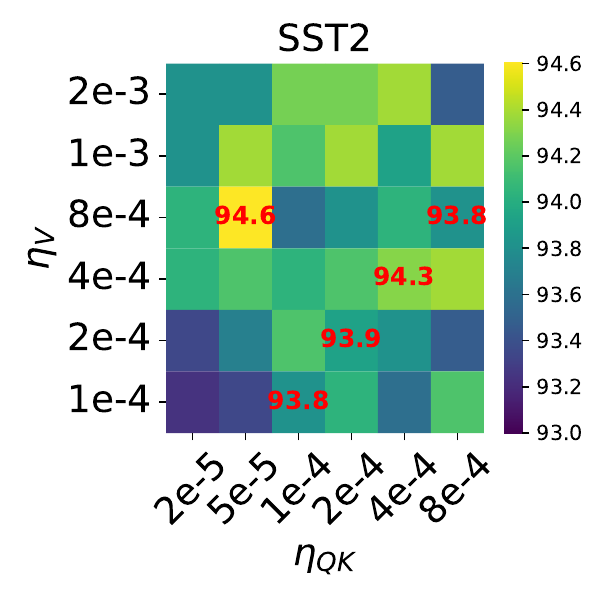}
    \end{minipage}

  \caption{The test accuracy of RoBERTa-base fine-tuning was evaluated over 3 epochs for MNLI, QQP, and QNLI, and 6 epochs for SST-2, with a sequence length $T=128$ and using half-precision (FP16). The LoRA hyperparameters were set to $\alpha=r=8$. All reported values represent the average results across 3 random seeds. We use \textcolor{red}{red} color to highlight (1) the best overall accuracy and (2) the values where $\eta_V/\eta_{QK}=1$. For better
visualization, when accuracy is lower than a fixed threshold, we set it to threshold.} \label{figure1}
  \vskip -0.15in
\end{figure}

In Figure \ref{figure1} and Figure \ref{figure2} (Appendix \ref{Empirical Results}), we observe that (1) test accuracy consistently reaches its maximum for certain sets of learning rates where $\eta_{QK}< \eta_V$,  outperforming the standard practice of setting $\eta_{QK}$
  and $\eta_V$ equal. (2) More interestingly, the gap between the optimal choice of learning rates overall and the optimal choice when $\eta_{QK}=\eta_V$ varies across different tasks. This is probably due to the fact that harder task (like MNLI) requires more efficient feature learning. We compare two optimal learning rate $(\eta_{QK},\eta_V)$ settings in Figure \ref{figure2} (Left), the  $\eta_V>>\eta_{QK}$ setting has a better convergence than $\eta_V=\eta_{QK}$ setting in Figure \ref{figure2} (Right).

  It is also important to note that due to limited computational resources in our experiments, we use a sequence length of $T=128$ and fine-tune for only 3 epochs on MNLI and QQP. Therefore, it is expected that our test accuracies may be lower than those reported by \citet{lora}, where the authors fine-tune RoBERTa-base with a sequence length of $T=512$ (for MNLI) and for more epochs (30 for MNLI). We do not include confidence intervals for clearer visualization, however, the fluctuations remain within acceptable limits. See Figure \ref{figure2} (Right) for instance. In Appendix \ref{Empirical Results}, we provide additional results including the training loss.

\subsection{Convergence Analysis for Learning Rate}\label{Convergence Analysis}
It naturally raises the question of why $\eta_{QK}$ and $\eta_V$ should be set differently. In practice, the large width (embedding dimension) of state-of-the-art models makes it valuable to examine training dynamics as width approaches infinity.

\textbf{Starting with a Toy setting.}\label{toy} Revisiting {\hypersetup{linkcolor=black}\hyperref[def1]{\textbf{Definition 1}}}, we have $\Delta \mathbf{h}=\text{softmax}(\mathbf{xA})\mathbf{B}$. In the case of a linear attention mechanism, we instead have $\Delta \mathbf{h}=\mathbf{xA}\mathbf{B}$. Then consider the following toy setting 
\begin{align*}
    f(x)=x(W^*+a^Tb),
\end{align*}
where $W^* \in \mathbb{R}^{n\times 1}$ are the fixed\footnote{Here, we primarily focus on the case of $\Delta \mathbf{W}$ to provide insightful theoretical results.} pre-trained weights, $b \in \mathbb{R},a\in\mathbb{R}^{1\times n}$ are adaptation weights, $x\in\mathbb{R}^{n}$ is the model input (This corresponds to $r=1$ in {\hypersetup{linkcolor=black}\hyperref[def1]{\textbf{Definition 1}}}). The training goal is to minimize the loss $\mathcal{L}(\theta)=\frac{1}{2}(f(x)-y)^2$ where $\theta=(a,b)$ and $(x,y)$ is an input-output datapoint\footnote{To simplify the analysis, we assume that the fine-tuning dataset consists of a single sample, though our analysis can be easily generalized to multiple samples. All conclusions remain essentially valid when $(a,b)$ are matrices.}. Similar to LoRA, we generally aim to initialize the product $a^Tb$ to zero, ensuring that fine-tuning starts from the pre-trained model. This requires at least one of the weights, $a\ (\text{related to} \mathbf{W}_q\&\mathbf{W}_k)$ or $b\ (\text{related to} \mathbf{W}_v)$, to be initialized to zero. If both are initialized to zero, $\mathbf{W}_q\&\mathbf{W}_k$ learning cannot occur efficiently in init steps, as discussed in Section \ref{insight} (More detailed  initialization settings are in Appendix \ref{Initialization}). 

And we assume that $x=\Theta(1)$, meaning that the input coordinates remain of the same order as the width increases. In the subsequent analysis, we examine how the fine-tuning dynamics evolve as the model width 
$n$ increases.

To streamline the analysis, we assume $W^*=0$, a common simplification that can be applied without loss of generality. This assumption is implemented by setting $\hat{y}=y-xW^*$. We denote the fine-tuning step by using subscript $t$. Let $U_t=f_t(x)-y$, the gradients are then computed as:
\begin{align*}
    \frac{\partial \mathcal{L}}{\partial a_t}=xU_tb_t,\ \ \ \frac{\partial \mathcal{L}}{\partial b_t}=xa_t^TU_t.
\end{align*}
And at step $t$ with learning rate $\eta_a,\eta_b> 0$, we have 
\begin{align*}
    \Delta f_t &\triangleq f_t(x)-f_{t-1}(x) 
    = -\underbrace{\eta_a||x||^2U_{t-1}b_{t-1}^2}_{\delta_t^1} \\ \nonumber
    &-\underbrace{\eta_b(xa_{t-1}^T)^2U_{t-1}}_{\delta_t^2} 
    +\underbrace{\eta_a\eta_b||x||^2(xa_{t-1}^T)U_{t-1}^2b_{t-1}}_{\delta_t^3}.
\end{align*}
\begin{remark}\label{remark:delta}The output update is influenced by three key terms. The first two items $\delta_t^1,\delta_t^2$ (order one in $\eta_a/\eta_b$) represent linear contributions to the update, meaning they result from changes in the model output when either 
$a$ is updated with $b$ held constant, or vice versa. The last item $\delta_t^3$ (order two in $\eta_a\eta_b$) corresponds to a multiplicative update that captures the combined effects of changes in both $a$ and $b$. As we scale the width\footnote{This property is generally satisfied in practice when the model width is large (e.g., $n \approx 800$ for Roberta-base and $n \approx 4000$ for Llama3.1-8b).}, \textbf{the desirable feature updates are such that $\Delta f_t=\Theta(1)$}, ensuring they remain unaffected by this scaling (the updates do not explode with width, see x for more details). Ideally, we aim for both $\delta_t^1$ and $\delta_t^2$ to be $\Theta(1)$. If this condition isn't met, it indicates that either $a$ or $b$ is not being updated efficiently. For example, if $\delta_t^1=o(1)$, it suggests that as \(n \to \infty\), the model behaves as if \(a\) is essentially fixed, with only \(b\) being trained. We say that the \textbf{feature learning in the attention mechanism is efficient} when $\delta_t^i=\Theta(1)$ for $i\in\{1,2\}$ and all $t>1$, it means
that both $a$ and $b$ parameter updates significantly contribute to the change in $f_t(x)$.  We will see that when both $\delta_t^1$ and $\delta_t^2$ are $\Theta(1)$, the term $\delta_t^3$ is also $\Theta(1)$. 
\end{remark}

Let us assume that we train the model with gradient descent with learning rate $\eta_a=\Theta(n^{c_a}),\eta_b=\Theta(n^{c_b})$ for some $c_a,c_b\in\mathbb{R}$. In the study by \citet{yang2022tensorprogramsvtuning}, it is noted that the training dynamics primarily involve operations such as matrix-vector products and the summation of vectors or scalars. Given the nature of these operations, it is easy to see that any quantity
in the training dynamics should be of order $n^{\gamma}$ for some $\gamma\in \mathbb{R}$. We write $v=\Theta(n^{\gamma[v]})$, for any quantity $v$ in the training dynamics.  When $v$ is a vector, we use the same notation when all entries of $v$ are $\Theta(n^{\gamma[v]})$ (See Appendix \ref{gamma} for the formal definition of $\gamma$).

With reference to the method of \citet{loraplus}, we start from the  initialization in {\hypersetup{linkcolor=black}\hyperref[toy]{\textbf{Starting with a Toy setting}}}, we have $f_0(x)=0$. Feature learning of attention mechanism is efficient when $\delta_t^i=\Theta(1)$ for $i\in\{1,2\}$ and all $t>1$, and $f_t(x)=\Theta(1)$ for $t>1$. This can be interpreted as:
\begin{align*}
\left\{
\begin{array}{l}
c_a + 1+ 2\gamma [b_{t-1}] = 0 \quad \left( \delta_t^1 = \Theta(1) \right) \\
c_b + 2\gamma [x a_{t-1}^\top ] = 0 \quad \left( \delta_t^2 = \Theta(1) \right) \\
 \gamma [xa_{t-1}^\top]+\gamma [b_{t-1}] = 0 \quad \left( f_{t-1}(x) = \Theta(1) \right),
\end{array}
\right.
\end{align*}
which, after simple calculations, implies that $c_a+c_b=-1$. Notice that the above also leads to the $c_a+c_b+1+ \gamma [xa_{t-1}^\top]+\gamma [b_{t-1}]=0\ (\delta_t^3=\Theta(1))$. This is only a necessary condition. In the following section, we will provide theoretical conclusions in the toy model setting that offer guidance for real-world experiments.
\begin{theorem}[Efficient fine-tuning in attention mechanism (Informal)]\label{theorem2} In the case of {\hypersetup{linkcolor=black}\hyperref[toy]{\textbf{Starting with a Toy setting}}}, with $\eta_a=\Theta(n^{-1})$ and $\eta_b=\Theta(1)$, we have for all $t>1$, $i\in\{1,2,3\}$,$\delta_t^i=\Theta(1)$. In other words, the feature learning of attention mechanism is efficient when $\eta_{QK}(\eta_a)=\Theta(n^{-1}),\eta_V(\eta_b)=\Theta(1)$. We denote $\eta_{V}/\eta_{QK}$ as $\lambda$.
We refer the reader to Appendix \ref{pf:t2} for more details on the proof.
\end{theorem}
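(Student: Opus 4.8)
The plan is to collapse the vector dynamics into a closed, width-free scalar recursion and then run an induction on the step $t$, tracking the order in the width $n$ of three scalar quantities: the feature $g_t := xa_t^T$, the up-weight $b_t$, and the residual $U_t = f_t(x)-y$. Since the coordinates of $x$ are $\Theta(1)$ we have $||x||^2 = \Theta(n)$, and the gradient-descent updates $a_t = a_{t-1} - \eta_a U_{t-1} b_{t-1} x$ and $b_t = b_{t-1} - \eta_b U_{t-1} xa_{t-1}^T$ yield the pair of recursions
\[
g_t = g_{t-1} - \eta_a ||x||^2 U_{t-1} b_{t-1}, \qquad b_t = b_{t-1} - \eta_b U_{t-1} g_{t-1}, \qquad f_t = g_t b_t .
\]
The first observation I would make is that the prescribed exponents $\eta_a = \Theta(n^{-1})$, $\eta_b = \Theta(1)$ are precisely the ones that make every coefficient collapse to order one: $\eta_a||x||^2 = \Theta(1)$ and $\eta_b = \Theta(1)$, so the recursion becomes a genuinely $n$-independent two-dimensional map. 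This is the conceptual heart of the statement: the chosen rates are exactly those that strip all width-dependence from the learning dynamics, leaving $g_t, b_t, U_t$ of order one.

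Next I would treat the base case using the initialization from \hyperref[toy]{\textbf{Starting with a Toy setting}}, namely $a_0 = 0$ (hence $g_0 = 0$) and $b_0 = \Theta(1)$, together with $y = \Theta(1)$ so that $U_0 = -y = \Theta(1)$. Because $g_0 = 0$ the $b$-update is inert at the first step ($b_1 = b_0 = \Theta(1)$), while $g_1 = -\eta_a||x||^2 U_0 b_0$; here the exponent $c_a = -1$ is exactly what forces $g_1 = \Theta(1)$ rather than vanishing (for $c_a < -1$) or exploding (for $c_a > -1$). This is also where I would explain why the single point $(c_a,c_b)=(-1,0)$ is selected from the whole necessary line $c_a+c_b=-1$: only this point is self-consistent with the balanced regime $\gamma[g_t]=\gamma[b_t]=0$ that the initialization induces. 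With $g_1,b_1,U_1 = \Theta(1)$ in hand, the inductive step is a direct order count: assuming $g_{t-1},b_{t-1},U_{t-1}=\Theta(1)$, multiplying orders gives $\delta_t^1 = \eta_a||x||^2 U_{t-1} b_{t-1}^2 = \Theta(1)$, $\delta_t^2 = \eta_b g_{t-1}^2 U_{t-1} = \Theta(1)$, and $\delta_t^3 = \eta_a\eta_b||x||^2 g_{t-1} U_{t-1}^2 b_{t-1} = \Theta(1)$, after which I close the induction by reading $g_t,b_t,U_t=\Theta(1)$ off the collapsed recursion.

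The step I expect to be the main obstacle is the $\Theta$ \emph{lower} bound, i.e. ruling out order-reducing cancellation in $g_t = g_{t-1} - \Theta(1)\,U_{t-1}b_{t-1}$ and in $U_t = g_t b_t - y$, so that these scalars stay bounded away from $0$ (and from $\infty$) over the finite horizon of interest. The upper bounds and the ``desirable $\Theta(1)$ feature update'' conclusion follow from the bookkeeping above, but guaranteeing non-vanishing requires arguing that the limiting width-free iterates are generically nondegenerate (no exact cancellation, and no exact interpolation $U_t = 0$). I would discharge this either by invoking the Tensor Programs master-theorem machinery of \citet{yang2022tensorprogramsvtuning,loraplus}, under which each quantity converges in the second moment to a well-defined limit and the exponents $\gamma[\cdot]$ are preserved through the sum and product operations of the dynamics, or, in an elementary finite-$t$ version, by a mild non-degeneracy assumption on the $\Theta(1)$ recursion. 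Finally I would translate back, identifying $\eta_a$ with $\eta_{QK}$ and $\eta_b$ with $\eta_V$, to conclude that efficient feature learning holds precisely when $\lambda = \eta_V/\eta_{QK} = \Theta(n)$.
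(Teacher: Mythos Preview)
Your proposal is correct and follows essentially the same approach as the paper: collapse to the scalar pair $(xa_t^\top,b_t)$, observe that $\eta_a\|x\|^2=\Theta(1)$ and $\eta_b=\Theta(1)$ render the recursion width-free, and then propagate $\Theta(1)$ orders by induction on $t$ (the paper phrases this via the $\gamma[\cdot]$-exponent calculus and the rule $\gamma[v_1+v_2]=\max(\gamma[v_1],\gamma[v_2])$, which is exactly the cancellation caveat you flag). The only omission is that the paper runs the induction for \emph{both} initialization schemes of Appendix~\ref{Initialization} (i.e.\ also $\sigma_a^2=\Theta(n^{-1}),\,\sigma_b^2=0$, not just the $a_0=0,\,b_0=\Theta(1)$ case you chose), but the argument in the other case is symmetric and equally short.
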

\begin{remark}\label{remark4}
    In practice, {\hypersetup{linkcolor=black}\hyperref[theorem2]{\textbf{Theorem 2}}} implies that the learning rate for $\mathbf{W}_v$ should be generally much larger than that of $\mathbf{W}_q\&\mathbf{W}_k$ in fine-tuning. We verify that this scaling is valid for general neural network models in Section \ref{insight}. Naturally, the optimal ratio $\lambda$ depends on the architecture and the fine-tuning task through the constants in ‘$\Theta$’. This represents a limitation of the asymptotic results, as they do not provide insights into how the task and neural architecture influence these constants. We will further address this issue in future.
\end{remark}
\section{An Example of Improving Fine-tuning}\label{Algorithm}
\begin{table*}[ht]
\centering
\resizebox{\textwidth}{!}{%
\begin{tabular}{lccccccccc}
\toprule
\textbf{Method} & \textbf{Trainable \#Param (M)} & \textbf{RTE} & \textbf{STS-B} & \textbf{MRPC} & \textbf{CoLA} & \textbf{MNLI} & \textbf{SST-2}  & \textbf{QQP} & \textbf{QNLI}    \\ 
\midrule
Before Fine-tune &  0 &  45.12 & -3.18 & 66.66 & 1.09 & 32.95 & 49.31 & 44.72 & 50.81\\
Full Fine-tune (QKV)  & 21.85    & 73.64 & 90.49 & 84.55 & 60.34 & 86.68 & 93.23  & \underline{90.48}   & 92.37 \\
LoRA (QKV) $r=8$   & 1.62   & 70.76 & 90.25 & 85.04 & 58.03 & 86.70 &  93.92 &  89.15  & 92.17 \\
LoRA (QKV) $r=16$      & 2.07  & 70.39  & 90.25  & 86.03  & 58.04  & 86.78 & 93.92  &  89.26  & 92.18 \\
 DoRA (QKV) $r=8$    &   1.06 &  70.75 &  90.39 & 85.78  & 56.79 &  86.73 &  93.58  &   89.34  & 92.22  \\
 DoRA (QKV) $r=16$   &   1.51 &  70.40 &  90.31 & 86.03  &  57.81 &  86.77 &  93.92  &  89.30   & 92.48  \\
\midrule
Full Fine-tune (QV) $\lambda=2$    & 14.76   & 73.53 & \underline{91.01} & 86.02 & 60.57 & 62.03 &  93.11 & \textbf{90.56}   & 91.96 \\
Full Fine-tune (QV) $\lambda=4$    & 14.76   & 72.29 & 90.56 & 87.01 & \textbf{61.88}& 35.44 & 91.05  & 89.81   &  88.85\\
Full Fine-tune (QV) $\lambda=8$    & 14.76  & 72.29 & 90.02 & \underline{88.97} & \underline{61.86} & 35.44 &  84.75 &   85.93 & 50.54 \\
\midrule
LoRA (QV) $r=8, \lambda=2$   & 1.48   &  71.84 & 90.37 & 86.02 & 58.54& 86.85 &  94.03 &  89.47  & 92.33 \\
LoRA (QV) $r=8, \lambda=4$   & 1.48   & 75.09 & 90.83 & 87.01 & 59.56 & 86.95 & 94.04  &  90.09  & 92.86 \\
LoRA (QV) $r=8, \lambda=8$   & 1.48   & 76.13 &  90.75 & \underline{88.97} & \textbf{61.88}& 86.93 & 93.46  & 90.01   & 92.34 \\
\midrule
LoRA (QV) $r=16, \lambda=2$   & 1.77   &  70.39 & 90.46 &  86.03 & 58.55  & 86.83 &  \textbf{94.38} &  89.77  & 92.33 \\
LoRA (QV) $r=16, \lambda=4$   & 1.77   &  \underline{76.17} & \textbf{91.05}  & 87.99  & 60.06  & \underline{87.19} &     94.03 & 90.30 & 92.73 \\
LoRA (QV) $r=16, \lambda=8$   & 1.77   &  72.92 &  90.96 & \textbf{89.95}  &  59.31 & \textbf{87.31} & 93.92  & 90.43   &  92.95 \\
\midrule
DoRA (QV) $r=8, \lambda=2$   &  0.90  &  71.12  &  90.29  &  87.01 &  58.54  &  87.08 &  93.96 &  89.60   &  92.60  \\
 DoRA (QV) $r=8, \lambda=4$   &  0.90  &  75.45  &  90.82  &  86.76 &   60.32 &  86.98 &  93.81 &  90.33   &  \underline{92.97}  \\
DoRA (QV) $r=8, \lambda=8$  &  0.90 &  70.76  &  90.38  &  87.75 &  57.01  &  87.12 &  94.15 &  90.45   &  92.48  \\
\midrule
DoRA (QV) $r=16, \lambda=2$   &  1.20  &  69.68  &  90.53  &  87.75 &  59.31  &  87.09 &  93.92 &   89.68  &   92.70 \\
DoRA (QV) $r=16, \lambda=4$  &  1.20 &  76.16  &  90.77  &  88.48 &  60.84  & 86.96  &  94.15 &  90.34   &   \textbf{93.01} \\
DoRA (QV) $r=16, \lambda=8$   &  1.20 &   \textbf{77.26} &  90.83  & 88.96  &  60.32  & 87.10  &  \underline{94.17} &  90.46   &  92.80  \\
\bottomrule
\end{tabular}
}
\caption{Comparison of fine-tuning methods across GLUE benchmark. We report results on development set, Pearson correlation
for STS-B, Matthew’s correlation for CoLA, average accuracy for MNLI (matched and mismatched), and accuracy
for other tasks.  The best results on each dataset
are shown in \textbf{bold} and the second best results are shown in \underline{underline}. 
The QKV(QV) setting refers to fine-tuning $\mathbf{W}_q,\mathbf{W}_k,\mathbf{W}_v(\mathbf{W}_q,\mathbf{W}_v)$. It is noted that the total number of parameters in the Roberta-base model is 124.65M. $\lambda$ means $\eta_V=\lambda \eta_Q$ and $r$ is the LoRA rank, and a larger $\lambda$ does not necessarily lead to better performance.}
\label{table2}
\end{table*}
Based on all our exciting insights, it becomes intuitive to design lightweight attention-based fine-tuning improvements, particularly for downstream tasks. To illustrate how theoretical analysis effectively guides experimental procedures, we propose an example method where we freeze the $\mathbf{W}_k$ and fine-tuning the $\mathbf{W}_q\& \mathbf{W}_v$ using different learning rates. This procedure is reported in Figure \ref{figure}. In Appendix \ref{discussions}, we discuss \textbf{how to set the ratio $\lambda$?}

\textbf{Experimental setup.} We conduct experiments  on widely adopted benchmark datasets \citep{wang2018glue} and Roberta-base model \citep{liu2019robertarobustlyoptimizedbert}. We selected mainstream baselines: Full Fine-tuning, LoRA \citep{lora} and DoRA \citep{liu2024dora}. Additionally, we adapt \textbf{only the attention weights} for downstream tasks, keeping the other modules frozen to maintain simplicity and validate the theoretical guidance through experiments. In our experiments, we evaluated the performance for $\lambda$ values of 2, 4, and 8 (one can also determine a general optimal ratio through experiments, and even apply different settings across different layers of the model).  We report the average results based on 3 random
seeds, as shown in Table \ref{table2}. The hyperparameter settings for the experiments
can be found in Appendix \ref{Hyperparameters} and the base model performance for each task can be seen in Table \ref{table2} and Appendix \ref{llama3-e}. We also extend ablation experiments on different models (Mistral-7B \citep{Mistral_AI2023}) in Appendix \ref{Mistral-7B}.

\textbf{Results.}  We leverage our theoretical results ({\hypersetup{linkcolor=black}\hyperref[theorem1]{\textbf{Theorem 1}}} and {\hypersetup{linkcolor=black}\hyperref[theorem2]{\textbf{Theorem 2}}}) to enhance the efficiency of existing fine-tuning methods, such as Full Fine-tune, LoRA \citep{lora} and DoRA \citep{liu2024dora}, on downstream tasks. As shown in Table \ref{table2}, the improved fine-tuning approach not only outperforms the original version but also significantly reduces the number of parameters. For instance, on the MRPC task, \textit{LoRA (QV) $r=16, \lambda=8$ (1.77M)} achieves better performance compared to \textit{Full Fine-tune (QKV) (21.85M)} and \textit{LoRA (QKV) $r=16$ (2.07M)}. This series of experiments clearly demonstrates that our theoretical insights effectively enhance fine-tuning algorithms, particularly in terms of memory usage and optimization efficiency. Moreover, these theoretical results can guide the improvement of other fine-tuning algorithms and even aid in the design of more efficient ones.

\section{Conclusion and Limitation}
In this paper, we present our key findings in fine-tuning attention mechanism: 
\textit{Unequal Importance of Attention Matrices} and \textit{Attention Matrices with Customized Learning Rate Lead to Better Convergence}. While theoretical analysis of these phenomena is limited, this paper provides insights from two angles: \textit{Generalization}—fine-tuning only \( \mathbf{W}_q \) and \( \mathbf{W}_v \) improves generalization and memory efficiency, and \textit{Optimization}—using different learning rates enhances the efficiency of feature learning in the attention mechanism, leading to more effective fine-tuning. Our analysis provides a theoretical foundation for the configuration and improvement of lightweight algorithms in LLMs fine-tuning. However, further studies are required on (i) how task type and architecture affect the optimal learning rate ratio $\lambda$, and (ii) whether these findings extend beyond NLP tasks. These studies will further deepen our understanding of attention-based fine-tuning.
\section*{Ethical Statement}

There are no ethical issues.

\section*{Acknowledgments}
We sincerely appreciate the anonymous reviewers for their valuable suggestions. This research was supported by National Natural Science Foundation of China (No.62476277), National Key Research and Development Program of China (NO. 2024YFE020320), CCF-ALIMAMA TECH Kangaroo Fund(No.CCF-ALIMAMA OF 2024008), and Huawei-Renmin University joint program on Information Retrieval. We also acknowledge the support provided by the fund for building worldclass universities (disciplines) of Renmin University of China and by the funds from Beijing Key Laboratory of Big Data Management and Analysis Methods, Gaoling School of Artificial Intelligence, Renmin University of China, from Engineering Research Center of Next-Generation Intelligent Search and Recommendation, Ministry of Education, from Intelligent Social Governance Interdisciplinary Platform, Major Innovation \& Planning Interdisciplinary Platform for the “DoubleFirst Class” Initiative, Renmin University of China, from Public Policy and Decision-making Research Lab of Renmin University of China, and from Public Computing Cloud, Renmin University of China.


\bibliographystyle{named}
\bibliography{ijcai25}

\newpage
\appendix
\onecolumn

\section{Omitted Proofs and Additional Results}
\subsection{The connection between Prefix tuning and LoRA.}\label{connection}
Here, we provide an alternative view of Prefix tuning (without loss of generalization, we ignore the softmax scaling factor for ease of notation):
\begin{align*}
    &\text{Attn}(\mathbf{x} \mathbf{W}_q, \text{concat}(\mathbf{P}_k, \mathbf{C} \mathbf{W}_k), \text{concat}(\mathbf{P}_v, \mathbf{C W}_v) \notag \\
     &=\text{softmax}(\mathbf{x} \mathbf{W}_q\text{concat}(\mathbf{P}_k, \mathbf{C} \mathbf{W}_k)^T)\begin{pmatrix} 
\mathbf{P}_v\\ 
\mathbf{CW}_v\\
\end{pmatrix} \notag \\
&= (1-\alpha(\mathbf{x}))\text{softmax}(\mathbf{x} \mathbf{W}_q\mathbf{W}_k^T\mathbf{C}^T)\mathbf{C}\mathbf{W}_v + \alpha(\mathbf{x})\text{softmax}(\mathbf{x} \mathbf{W}_q\mathbf{P}_k^T)\mathbf{P}_v \\ \notag
&= (1-\alpha(\mathbf{x}))\overbrace{\text{Attn}(\mathbf{x} \mathbf{W}_q,\mathbf{CW}_k,\mathbf{CW}_v)}^{\textcolor{blue}{\text{standard attention}}}+\alpha(\mathbf{x})\overbrace{\text{Attn}(\mathbf{x} \mathbf{W}_q,\mathbf{P}_k,\mathbf{P}_v)}^{\textcolor{deepgreen}{\text{independent of C}}},
\end{align*} 
where $\alpha(\mathbf{x})=\frac{\sum_i\text{exp}(\mathbf{xW}_q\mathbf{P}_k^T)_i}{\sum_i\text{exp}(\mathbf{xW}_q\mathbf{P}_k^T)_i+\sum_j\text{exp}(\mathbf{xW}_q\mathbf{W}_k^T\mathbf{C}^T)_j}$  is a scalar that represents the sum of normalized attention weights on the prefixes. Notice that the first term in  \textcolor{blue}{blue} represents the original attention mechanism without prefixes, while the second term in \textcolor{deepgreen}{green} introduces a position-wise adjustment independent of $\mathbf{C}$. It provides an alternative perspective on Prefix tuning, where a position-wise modification is applied to the original attention output $\mathbf{h}$ via linear interpolation:
\begin{align*}
    \mathbf{h} \leftarrow (1-\alpha(\mathbf{x})\mathbf{h}+\alpha(\mathbf{x})\Delta \mathbf{h},\ \ \ \Delta \mathbf{h}:= \text{softmax}(\mathbf{xW}_q\mathbf{P}_k^T)\mathbf{P}_v \triangleq \text{softmax}(\mathbf{xA})\mathbf{B}.
\end{align*}

\subsection{Proof of Theorem 1}\label{pf:t1}
The origin form of the mutual information based bound is predicated on a sample-specific MI, which quantifies the shared information between the output variable $\mathbf{W}$ and the input sample set $S_N$. The following lemma shows the result:
\begin{lemma}
    (\citet*[Theorem 1.]{xu2017information}).\label{ap:lemma1} Assume the loss $\ell(\mathbf{W}, Z)$ is R-subGaussian for any $\mathbf{W}\in \mathcal{W}$, then 
\begin{equation*}
    \widetilde{\text{error}}(\mathcal{A}) \leq \sqrt{\frac{2R^2}{N} I(\mathbf{W}; S_N)}, 
\end{equation*} 
where $I(\mathbf{W}; S_N) = D_{KL}(Q_{\mathbf{W},S_N} \| Q_{\mathbf{W}} \otimes Q_{S_N})$ is the mutual information and $D_{KL}$ denotes the $KL$ divergence. 
\end{lemma}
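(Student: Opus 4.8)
The plan is to establish the bound through the Donsker--Varadhan variational characterization of KL divergence, combined with the subgaussian concentration of the centered empirical process. Write $g(\mathbf{W},S_N)\triangleq L_\mu(\mathbf{W})-L_{S_N}(\mathbf{W})$, so that $\widetilde{\text{error}}(\mathcal{A})=\mathbb{E}_{Q_{\mathbf{W},S_N}}[g]$ is an expectation under the joint law of output and training set, while $Q_{\mathbf{W}}\otimes Q_{S_N}$ denotes the product of the marginals, under which $\mathbf{W}$ and $S_N$ are independent and $D_{KL}(Q_{\mathbf{W},S_N}\,\|\,Q_{\mathbf{W}}\otimes Q_{S_N})=I(\mathbf{W};S_N)$. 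First I would invoke the Donsker--Varadhan inequality: for any measurable $f$ with finite exponential moment under the product measure,
\[
\mathbb{E}_{Q_{\mathbf{W},S_N}}[f]\le D_{KL}\big(Q_{\mathbf{W},S_N}\,\|\,Q_{\mathbf{W}}\otimes Q_{S_N}\big)+\log\mathbb{E}_{Q_{\mathbf{W}}\otimes Q_{S_N}}\big[e^{f}\big].
\]
Applying this with $f=\lambda g$ for a free parameter $\lambda>0$ turns the left-hand side into $\lambda\,\widetilde{\text{error}}(\mathcal{A})$ and the divergence term into $I(\mathbf{W};S_N)$, so the whole argument reduces to controlling the log-moment-generating function of $g$ under the decoupled measure.

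The key step is the uniform subgaussian MGF estimate. Under $Q_{\mathbf{W}}\otimes Q_{S_N}$ the hypothesis $\mathbf{W}$ is drawn independently of the i.i.d.\ sample; conditioning on $\mathbf{W}=w$, I would write $g(w,S_N)=\frac{1}{N}\sum_{i=1}^{N}\big(\mathbb{E}_{Z}[\ell(w,Z)]-\ell(w,Z_i)\big)$ as an average of $N$ independent, zero-mean terms. Each summand is R-subGaussian by assumption, so independence factorizes the exponential moment and the $1/N$ weighting gives
\[
\mathbb{E}_{S_N}\big[e^{\lambda g(w,S_N)}\big]=\prod_{i=1}^{N}\mathbb{E}_{Z_i}\Big[e^{(\lambda/N)(\mathbb{E}[\ell(w,Z)]-\ell(w,Z_i))}\Big]\le e^{\lambda^2R^2/(2N)},
\]
uniformly in $w$. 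Taking the expectation over $\mathbf{W}$ preserves this bound, so $\log\mathbb{E}_{Q_{\mathbf{W}}\otimes Q_{S_N}}[e^{\lambda g}]\le \lambda^2R^2/(2N)$, and substitution into the Donsker--Varadhan bound yields $\lambda\,\widetilde{\text{error}}(\mathcal{A})\le I(\mathbf{W};S_N)+\lambda^2R^2/(2N)$ for every $\lambda>0$.

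Finally I would optimize the free parameter. Rearranging gives $\widetilde{\text{error}}(\mathcal{A})\le I(\mathbf{W};S_N)/\lambda+\lambda R^2/(2N)$, and minimizing the right-hand side at $\lambda^\star=\sqrt{2N\,I(\mathbf{W};S_N)/R^2}$ produces exactly $\widetilde{\text{error}}(\mathcal{A})\le\sqrt{2R^2\,I(\mathbf{W};S_N)/N}$, which is the claimed inequality. The main obstacle is the uniform-in-$w$ subgaussian MGF step: one must invoke the R-subGaussian hypothesis for the centered loss at each fixed hypothesis, use independence of the $Z_i$ correctly to factorize the exponential moment, and track how the $1/N$ scaling of each summand converts the per-sample variance proxy $R^2$ into the $R^2/N$ that drives the final $N^{-1/2}$ rate. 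The subsequent $\lambda$-optimization is a routine one-dimensional calculus exercise, and the one-sided statement as worded follows directly from the $\lambda>0$ branch (the matching lower bound would come from the symmetric $\lambda<0$ choice).
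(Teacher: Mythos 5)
Your proof is correct and takes essentially the same route as the source of this statement: the paper does not prove the lemma itself but cites it verbatim as Theorem~1 of Xu and Raginsky (2017), and your argument --- the Donsker--Varadhan decoupling of $Q_{\mathbf{W},S_N}$ against $Q_{\mathbf{W}}\otimes Q_{S_N}$, the uniform-in-$w$ moment bound $\log\mathbb{E}\bigl[e^{\lambda g}\bigr]\le \lambda^2R^2/(2N)$ obtained by conditioning on $\mathbf{W}=w$ and factorizing over the i.i.d.\ $Z_i$, and the optimization at $\lambda^\star=\sqrt{2N\,I(\mathbf{W};S_N)/R^2}$ --- is precisely the canonical proof given in that reference. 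All the delicate points (independence under the product measure, the $1/N$ scaling converting the per-sample proxy $R^2$ into $R^2/N$, and the one-sidedness from restricting to $\lambda>0$) are handled correctly, so there is no gap.
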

Unroll the terminal parameters’ mutual information $I(\mathbf{W}; S_N)$  to the full trajectories’ mutual information will get:
\begin{lemma}\label{ap:lemma2} Let {\hypersetup{linkcolor=black}\hyperref[def1]{\textbf{Definition 1}}} hold, then $I(\mathbf{W}+\Delta \mathbf{W}; S_N|\mathcal{A})\leq I(\Delta \mathbf{W}; S_N|\mathcal{A},\mathbf{W})$. 
\end{lemma}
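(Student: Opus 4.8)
The plan is to relate the mutual information of the \emph{final} fine-tuned parameters $\mathbf{W}+\Delta\mathbf{W}$ with the data $S_N$ to that of the \emph{increment} $\Delta\mathbf{W}$ with the data, by peeling off the contribution of the (data-independent) pretrained weights $\mathbf{W}$. Throughout, every information quantity is understood to be conditioned on the algorithm $\mathcal{A}$, which I keep implicit to lighten notation and restore only at the end.

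First I would observe that the output hypothesis $\mathbf{W}+\Delta\mathbf{W}$ is a deterministic (measurable) function of the pair $(\mathbf{W},\Delta\mathbf{W})$. Hence the Markov chain $S_N \to (\mathbf{W},\Delta\mathbf{W}) \to \mathbf{W}+\Delta\mathbf{W}$ holds, and the data-processing inequality yields
\[
I(\mathbf{W}+\Delta\mathbf{W}; S_N) \le I(\mathbf{W},\Delta\mathbf{W}; S_N).
\]
Next I would apply the chain rule for mutual information to split off the pretrained weights:
\[
I(\mathbf{W},\Delta\mathbf{W}; S_N) = I(\mathbf{W}; S_N) + I(\Delta\mathbf{W}; S_N \mid \mathbf{W}).
\]

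The key structural fact is that $\mathbf{W}$ is obtained prior to, and independently of, the fine-tuning sample $S_N$: under the generative model of \textbf{Definition 1}, $S_N \sim \mu^N$ is drawn from the downstream distribution while $\mathbf{W}$ is a fixed input handed to the fine-tuning algorithm. Consequently $I(\mathbf{W}; S_N) = 0$ and the first term vanishes. Chaining the two displays and reinstating the conditioning on $\mathcal{A}$ gives exactly
\[
I(\mathbf{W}+\Delta\mathbf{W}; S_N \mid \mathcal{A}) \le I(\Delta\mathbf{W}; S_N \mid \mathcal{A}, \mathbf{W}),
\]
which is the claimed bound.

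The main obstacle is not the chain of inequalities but the careful justification of $I(\mathbf{W}; S_N) = 0$ together with the correct handling of the conditioning. One must make explicit that the pretraining stage and the fine-tuning set $S_N$ are modeled as independent, so that conditioning on $\mathbf{W}$ does not smuggle in information about $S_N$; and one must ensure that the data-processing step and the chain rule are both applied consistently in the conditional (given $\mathcal{A}$) world. Once these modeling assumptions are stated, the remaining steps are standard identities and inequalities for conditional mutual information.
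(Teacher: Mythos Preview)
Your proposal is correct and follows essentially the same route as the paper: data-processing inequality from the Markov chain $S_N \to (\mathbf{W},\Delta\mathbf{W}) \to \mathbf{W}+\Delta\mathbf{W}$, then the chain rule to split off $I(\mathbf{W};S_N)$, which vanishes by independence of the pretrained weights and the fine-tuning sample. Your write-up is in fact slightly more explicit than the paper's about the Markov structure and the independence assumption.
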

\begin{proof}
\begin{align*}
&I(\mathbf{W}+\Delta \mathbf{W}; S_N|\mathcal{A})\\
&\leq I(\mathbf{W},\Delta \mathbf{W}; S_N|\mathcal{A})  \tag{*}\\  
&=I(\mathbf{W}; S_N|\mathcal{A})+I(\Delta \mathbf{W};  S_N|\mathcal{A},\mathbf{W})\tag{**}\\
&=I(\Delta \mathbf{W};  S_N|\mathcal{A},\mathbf{W}).
\end{align*}
where Eq.~(*) is by the data processing inequality (e.g., $Z - (X, Y) - (X + Y)$ form a Markov chain then $I(X + Y, Z) \leq I(X, Y; Z)$), Eq.~(**) is by the chain rule of the mutual information, and $I(\mathbf{W};S_N)=0$ for $\mathbf{W}$ is independent of $S_N$.
\end{proof}
Then combine {\hypersetup{linkcolor=black}\hyperref[ap:lemma1]{\textbf{Lemma 1}} } and {\hypersetup{linkcolor=black}\hyperref[ap:lemma2]{\textbf{Lemma 2}} }, we can get:
$ \widetilde{\text{error}}(\mathcal{A}) \leq \sqrt{\frac{2R^2}{N} I(\Delta \mathbf{W};  S_N|\mathcal{A},\mathbf{W})}.$\\
We consider the case of tuning $\mathbf{W}_q\&\mathbf{W}_v$ only first. Applying the above results, note that here
\begin{align*}
I(\Delta\mathbf{W};S_N|\mathcal{A}_{QV},\mathbf{W})&=I(\{\mathbf{W}_q^i,\mathbf{W}_v^i\}_{i\in\mathcal{I}};S_N|\mathcal{A}_{QV},\mathbf{W}),
\end{align*}
where we have used the data processing inequality (DPI), noting that the $\mathbf{W}_k^i$ are here considered fixed constant matrices as they are not trained.

We can now bound this expression as
\begin{align*}
    I(\{\mathbf{W}_q^i,\mathbf{W}_v^i\}_{i\in\mathcal{I}};S_N|\mathcal{A}_{QV},\mathbf{W})\leq H(\{\mathbf{W}_q^i,\mathbf{W}_v^i\}_{i\in\mathcal{I}})\leq 2qr\sum_{i\in\mathcal{I}}(d^i+k^i),
\end{align*}
where $\mathbf{W}_q^i,\mathbf{W}_k^i,\mathbf{W}_v^i\in \mathbb{R}^{d_{in}\times d_{out}}$, and  we have noted that mutual information is upper bounded by discrete entropy, and entropy in turn is upper
bounded by the uniform distribution over its possible support set (q bits in each of $r\sum_{i\in\mathcal{I}}(d_{in}+d_{out})$ dimensions). The bounds for the other algorithms are similar.

\subsection{Initialization Discussion}\label{Initialization}
Following  standard initialization schemes (e.g.,  LeCun Init and He Init \citep{lecun2002efficient,he2016deep}), one generally consider a Gaussian initialization of the weights as follows: $a_i \sim \mathcal{N}(0, \sigma_a^2), \quad b \sim \mathcal{N}(0, \sigma_b^2)$ (The Gaussian distribution can be substituted with any other distribution that has finite variance). Revisiting {\hypersetup{linkcolor=black}\hyperref[toy]{\textbf{Starting with a Toy setting}}}, $a\in\mathbb{R}^{1\times n},b\in\mathbb{R}$. Thus, one should set $\sigma_a^2=\Theta(n^{-1}),\sigma_b^2=0$ to ensure $xa^T$ does not explode with width ($xa^T=\Theta(1)$), for a non-zero
initialization for $a$. This is justified by 
 the Central Limit Theorem (See \citep{yang2022tensorprogramsvtuning} for more technical details). And if we choose
a non-zero initialization for $b$, one should make sure that
$\sigma_b^2=\Theta(1),\sigma_a^2=0$. And we will consider these two initialization schemes to show our theoretical understanding.

\subsection{Gamma Function}\label{gamma}
\textbf{Why introduce the Gamma function?}\\
In Section \ref{Convergence Analysis}, the learning rate $\eta_a=\Theta(n^{c_a}),\eta_b=\Theta(n^{c_b})$ for some $c_a,c_b\in\mathbb{R}$. And in Appendix \ref{Initialization} we assume that the init weights are also scale polynomially with n, it is evident that preactivations, gradients, and weight updates all exhibit asymptotic polynomial growth in n. \\
\textbf{Operations.}\\
We write $v=\Theta(\gamma[v])$ to capture it, and some elementary operations (Given two real-valued variables $v_1,v_2$): \\
\begin{itemize}
    \item Multiplication. $\gamma[v_1\times v_2]=\gamma[v_1]+\gamma[v_2].$
    \item Addition. Generally, we have $\gamma[v_1+v_2]=\max(\gamma[v_1],\gamma[v_2])$. The only instance where this does not hold is when $v_1=-v_2$. This is typically a zero-probability event if the two variables are random variables that are not perfectly correlated, which is the case in most scenarios where we apply this formula (Appendix \ref{pf:t2}).
\end{itemize}
\subsection{Proof of Theorem 2}\label{pf:t2}
\textbf{Theorem 2.}[Efficient fine-tuning in attention mechanism (Informal)]\\
In the case of {\hypersetup{linkcolor=black}\hyperref[toy]{\textbf{Starting with a Toy setting}}}, with $\eta_a=\Theta(n^{-1})$ and $\eta_b=\Theta(1)$, we have for all $t>1$, $i\in\{1,2,3\}$,$\delta_t^i=\Theta(1)$. In other words, the feature learning of attention mechanism is efficient when $\eta_{QK}(\eta_a)=\Theta(n^{-1}),\eta_V(\eta_b)=\Theta(1)$.
\begin{proof}
    In Section \ref{remark:delta}, we say that the
feature learning of attention mechanism is efficient when $\delta_t^i=\Theta(1)$ for all $t,i\in\{1,2,3\}$. Using the elementary formulas from Appendix \ref{gamma}, we can get (for all $t$):
\begin{align*}
\left\{
\begin{array}{l}
\gamma[\eta_a] + 1+ 2\gamma [b_{t-1}] = 0 \quad \left( \delta_t^1 = \Theta(1) \right) \\
\gamma[\eta_b] + 2\gamma [x a_{t-1}^\top ] = 0 \quad \left( \delta_t^2 = \Theta(1) \right) \\
 \gamma[\eta_a]+\gamma[\eta_b]+1+\gamma [xa_{t-1}^\top]+\gamma [b_{t-1}] = 0 \quad \left( \delta_t^3 = \Theta(1) \right).
\end{array}
\right.
\end{align*}
Simple calculations yield $\gamma[\eta_a]+\gamma[\eta_b]=-1$. Further consider the gradient update from $t-1$ to $t$, the  recursive formulas are given by:
\begin{align*}
\left\{
\begin{array}{l}
\gamma [x a_{t}^\top] = \max \left(\gamma [x a_{t-1}^\top], \gamma[\eta_a]+1+\gamma[b_{t-1}] \right) \\
\gamma [b_{t}] = \max \left( \gamma [b_{t-1}],\gamma[\eta_b]+\gamma [x a_{t-1}^\top] \right) 
\end{array}
\right.
\end{align*}
Starting from $t=1$. In both initialization schemes discussed in Appendix \ref{Initialization}, we have to set $\gamma[\eta_b]=0$ and $\gamma[\eta_a]=-1$ to ensure that $\gamma[f_t]=\gamma[xa_t^T]+\gamma[b_t]=0$:\\
(1) $\sigma_a^2=\Theta(n^{-1}),\sigma_b^2=0$. We have $\gamma[xa_1^T]=\gamma[xa_0^T]=0,\ \gamma [b_{1}]=\gamma[\eta_b(xa_0^T)y]=\gamma[\eta_b]$. Therefore, for $t=2$, $\gamma[xa_2^T]=\max(0,\gamma[\eta_a]+1+\gamma[\eta_b])=\max(0,0)=0,\ \gamma[b_2]=\max(\gamma[\eta_b],\gamma[\eta_b]+0)=\gamma[\eta_b]$, this holds for $t\geq 1$ by induction.\\
(2) $\sigma_a^2=0,\sigma_b^2=\Theta(1)$. We have $\gamma [b_{1}]=\gamma [b_{0}]=0,\ \gamma[xa_1^T]=\gamma[\eta_a||x||^2U_0b_0^2]=\gamma[\eta_a]+1$. Therefore, for $t=2$, $\gamma [b_{2}]=\max (0,\gamma[\eta_b]+\gamma[\eta_a]+1)=\max(0,0)$,$\gamma[xa_2^T]=\max(\gamma[\eta_a]+1,\gamma[\eta_a]+1+0)=\gamma[\eta_a]+1$,this holds for $t\geq 1$ by induction.\\
To sum up, setting $\eta_{QK}(\eta_a)=\Theta(n^{-1}),\eta_V(\eta_b)=\Theta(1)$ ensures efficient fine-tuning in attention mechanism.
\end{proof}
\newpage
\section{Extension to Experiments}
\subsection{Empirical Details}\label{Empirical Details}
\subsubsection{GLUE Tasks with RoBERTa}
For our experiments with Roberta-base models, finetuned on GLUE tasks, we use the following setup:\\
\textbf{Tasks.} MNLI, QQP, SST2, QNLI\\
\textbf{Training Algorithm.} AdamW with $\beta_1=0.9,\beta_2=0.99,\epsilon=1e-8$, linear schedule, no warmup.\\
\textbf{Targert Modules for Fine-tuning.} ‘query’, ‘key’ and ‘value’.\\
\textbf{Learning rate.}\\
(1) For Table \ref{table1}, $\eta_{QK}=\eta_V=5e^{-5}$.\\
(2) For Figure \ref{figure1},\\ $\eta_{QK}=\{2e^{-5}, 5e^{-5}, 1e^{-4}, 2e^{-4}, 4e^{-4}, 8e^{-4}\}$,\\ $\eta_V=\{1e^{-4},2e^{-4},4e^{-4},8e^{-4}, 1e^{-3},2e^{-3}\}$\\
\textbf{GPUs.} Nvidia A800.\\
\textbf{Other Hyperparameters.} Sequence length $T = 128$, train batch size $batchsize = 32$, number of train , number of random seeds $s = 3$.\\
(1) For Table \ref{table1}, epochs E = 6 (E = 10
for SST2).\\
(2) For Figure \ref{figure1},  epochs E = 3 (E = 6
for SST2).\\
\textbf{Training Hyperparameters.}\label{Hyperparameters}
\textbf{Training hyperparameters.}
\begin{table}[ht]
\centering
\begin{tabular}{lcccc}
\toprule
\textbf{Corpus} & \textbf{length} & \textbf{learning rate} & \textbf{batch size} & \textbf{epochs} \\ 
\midrule
RTE   & 128 & 1e-04   & 32  & 20 \\
MRPC  & 128 & 1e-04 & 32  & 20 \\
STS-B & 128 & 1e-04   & 32  & 20 \\
CoLA  & 128 & 1e-04   & 32  & 20 \\
SST-2 & 128 & 1e-04   & 32  & 10 \\
QNLI  & 128 & 1e-04 & 32  & 10 \\
QQP   & 128 & 1e-04   & 32  & 10 \\
MNLI  & 128 & 1e-04   & 32  & 10 \\
\bottomrule
\end{tabular}
\caption{Training hyperparameters for different datasets. More details can be seen in our code.}
\end{table}
\begin{itemize}
    \item For \textit{Full Fine-tune (QKV) } and \textit{LoRA (QKV)}, we use $\eta_Q=\eta_K=\eta_V=\text{1e-04}$.
    \item For the improved methods, we use $\eta_Q=\text{1e-04}$, $\eta_V = \lambda\times \text{1e-04}$.
\end{itemize}
The hyperparameter settings here differ from those in Table \ref{table1} and Figure \ref{figure1}, so the results may show slight variations.
\subsubsection{MNLI Task with Llama3.1-8b}
For our experiments with Llama3.1-8b models, finetuned on MNLI, we use the following setup:\\
\textbf{Training Algorithm.} AdamW with $\beta_1=0.9,\beta_2=0.999,\epsilon=1e-6$, constant schedule.\\
\textbf{Targert Modules for Fine-tuning.} ‘q\_proj, k\_proj, v\_proj’.\\
\textbf{Learning rate grid.} \\
For Table \ref{table1}, $\eta_{QK}=\eta_V=1e^{-5}$.\\
Else:\\
$\eta_{QK}=\{1e^{-6}, 5e^{-6}, 1e^{-5}, 5e^{-5}, 1e^{-4}\}$,\\ $\eta_V=\{1e^{-6}, 5e^{-6}, 1e^{-5}, 5e^{-5}, 1e^{-4},2e^{-4},4e^{-4}\}$\\
\textbf{Hyperparameters.} LoRA rank $r = 16, \alpha = 16$, and dropout 0.1. Precision FP16. Sequence length $T = 128$, train batch size $batchsize = 128$.\\
\textbf{GPUs.} Nvidia A800.\\
\textbf{Before fine-tuning model performance.}
QQP:  55.08 , MNLI: 33.34

\newpage
\subsection{More Discussions}\label{discussions}
\textbf{Why fine-tune $\mathbf{W}_q\&\mathbf{W}_v$ instead of $\mathbf{W}_k\&\mathbf{W}_v$.} In Eq.(\ref{eq1}), the conventional attention function includes a term $\mathbf{xW}_q\mathbf{W}_k^T$. (1) In linear algebra, two matrices multiplied without an intermediate activation can be equivalent to a single matrix. Therefore, the effects of fine-tuning $\mathbf{W}_q\&\mathbf{W}_v$ and $\mathbf{W}_k\&\mathbf{W}_v$ are theoretically expected to yield similar outcomes (See the supplementary experimental results provided in Appendix x). (2) $\mathbf{W}_k$ operates only on the transformed representation matrix $\mathbf{xW}_q$ produced by the preceding transformation. Consequently, it loses direct access to the original representation information. This observation is consistent with the findings in \citep{qin2024accurate}: the information representation in LoRA also exhibits significant limitations, as in $\Delta \mathbf{h} = \mathbf{xAB}$, where $\mathbf{B}$ similarly lacks access to the original representation information.\\
\textbf{How to set the ratio $\lambda$?} Naturally, as discussed in {\hypersetup{linkcolor=black}\hyperref[remark4]{\textbf{Remark 4}}}, the optimal ratio $\lambda$ depends on the architecture and the fine-tuning task via the constants in $\Theta$ in {\hypersetup{linkcolor=black}\hyperref[theorem2]{\textbf{Theorem 2}}}. This is a limitation of these asymptotic results since they do not offer any insights on how the constants are affected by the task and the neural architecture. However, we can still employ some heuristic methods, such as: we can select an appropriate range by conducting a certain amount of experiments, as shown in Figure \ref{figure1}, it seems that a ratio of order $2^1-2^4$ is optimal. Moreover, $\lambda$ should not be too large; otherwise, as shown in the MNLI subplot in Figure \ref{figure1}, the model's performance will collapse.
\subsection{Empirical Results}\label{Empirical Results}
\begin{figure*}[ht]
  \centering
    \begin{minipage}[b]{0.505\textwidth}
        \centering
        \includegraphics[width=\textwidth]{ 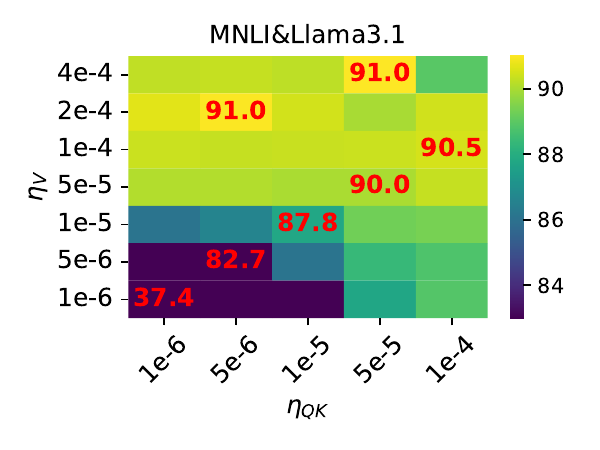}
    \end{minipage}
    \begin{minipage}[b]{0.480\textwidth}
        \centering
        \includegraphics[width=\textwidth]{ 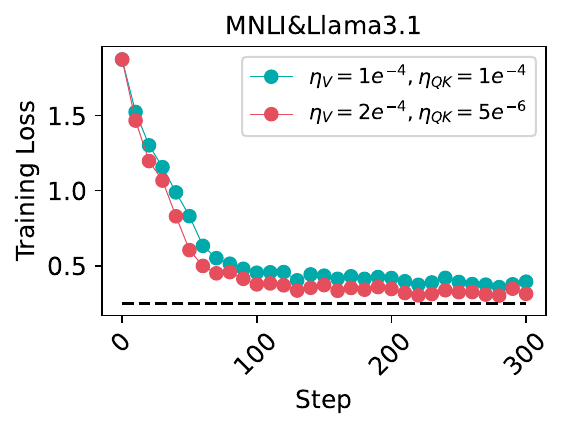}
    \end{minipage}

  \caption{Left: The test accuracy of Llama3.1-8b fine-tuning was evaluated over 800 steps for MNLI. Key values like Figure \ref{figure1} are also shown in red. Right: The training loss over 800 steps for MNLI fine-tuning on Llama3.1-8b, showing comparison between two optimal  learning rate $(\eta_{QK},\eta_V)$ settings in Left: (1) with $\eta_V=\eta_{QK}$ (2) with $\eta_V>>\eta_{QK}$.}\label{figure2}
  \vskip -0.in
\end{figure*}
\textbf{GLUE Tasks Train Loss.}
\begin{figure}[ht]
  \centering
    \begin{minipage}[b]{0.2455\textwidth}
        \centering
        \includegraphics[width=\textwidth]{ 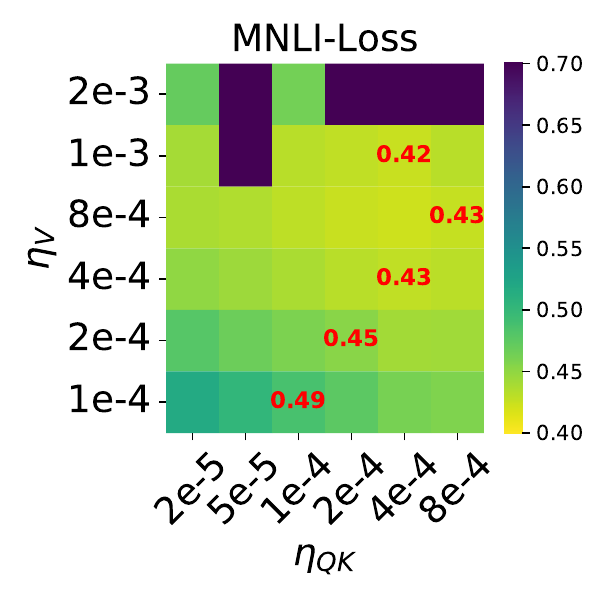}
    \end{minipage}
    \begin{minipage}[b]{0.2455\textwidth}
        \centering
        \includegraphics[width=\textwidth]{ 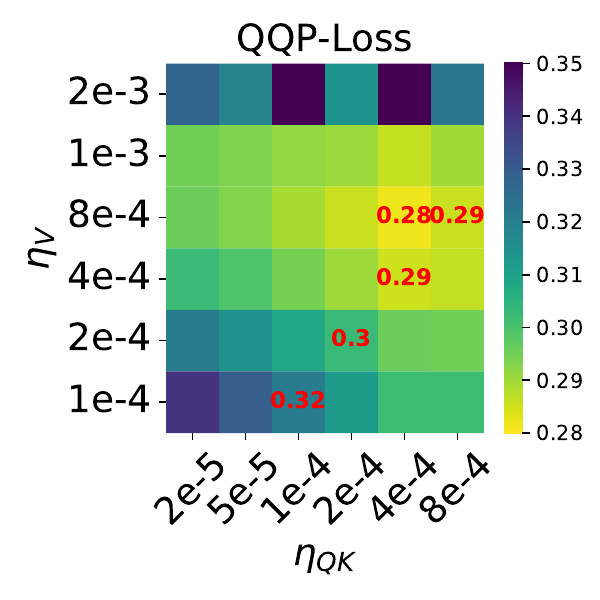}
    \end{minipage}
    \begin{minipage}[b]{0.2455\textwidth}
        \centering
     \includegraphics[width=\textwidth]{ 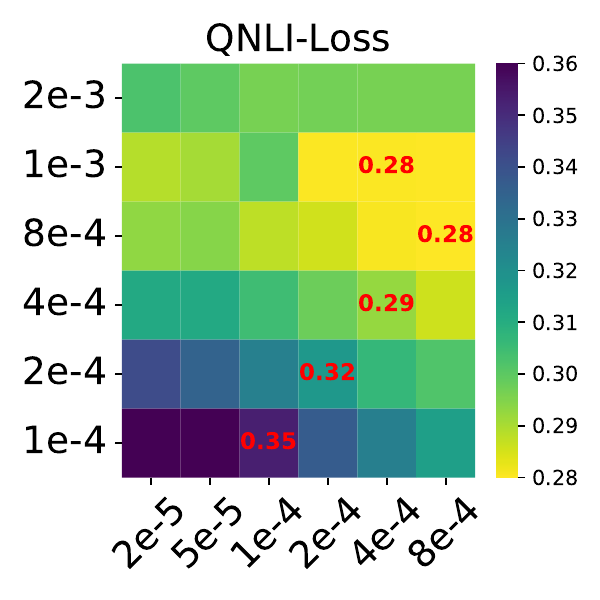}
    \end{minipage}
    \begin{minipage}[b]{0.2455\textwidth}
        \centering
        \includegraphics[width=\textwidth]{ 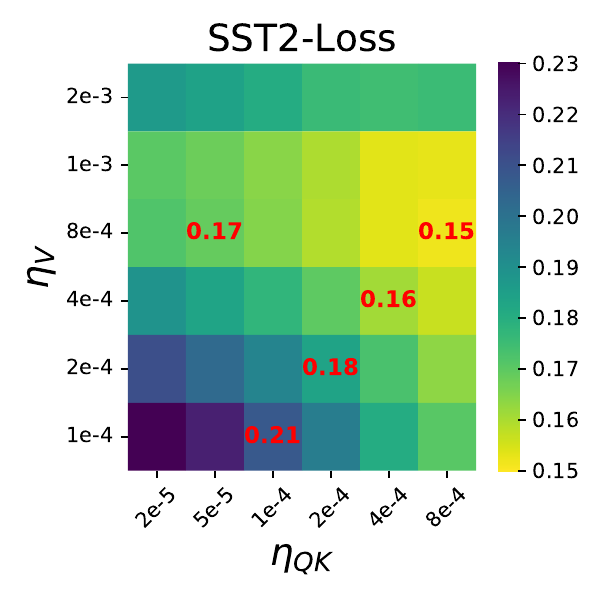}
    \end{minipage}

  \caption{The train loss of RoBERTa-base fine-tuning. Other settings are same to Figure \ref{figure1}.} \label{figure3}
\end{figure}
\subsubsection{MNLI Llama3.1-8b}\label{llama3-e}
\begin{figure}[H]
        \centering
        \includegraphics[width=0.5\textwidth]{ 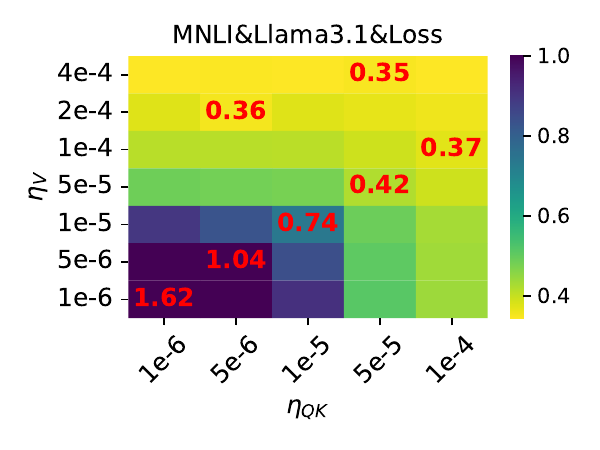}
        \caption{The train loss of Llama3.1-8b fine-tuning. Other settings are same to Figure \ref{figure2}.}
        \label{figure4}
\end{figure}
\subsubsection{Ablation Experiments on Mistral-7B}\label{Mistral-7B}
In alignment with the experimental setup (hyperparameter setting for Llama3.1-8b) described in our Section \ref{insight}, Figure \ref{figure2}, we have evaluated the RTE and MNLI task performances of our approach on Mistral-7B:
\begin{table}[ht]
\centering
\begin{tabular}{lcccc}
\toprule
\textbf{Method}   & \textbf{Hyperparameter}  & \textbf{RTE}   & \textbf{MNLI}  \\ 
\midrule
LoRA (QKV)        & $r=16$, $\lambda=1$     & 81.28          & 87.80          \\ 
LoRA (QV)         & $r=16$, $\lambda=1$     & 80.51          & 88.87          \\ 
LoRA (QV)         & $r=16$, $\lambda=2$     & 81.59          & \textbf{89.04} \\ 
LoRA (QV)         & $r=16$, $\lambda=4$     & 80.87          & 88.64          \\ 
LoRA (QV)         & $r=16$, $\lambda=8$     & \textbf{83.75} & 88.78          \\ 
\bottomrule
\end{tabular}
\end{table}
\subsubsection{Algorithm Framework}
\begin{figure}[H]
  \centering
\includegraphics[width=0.7\textwidth]{ 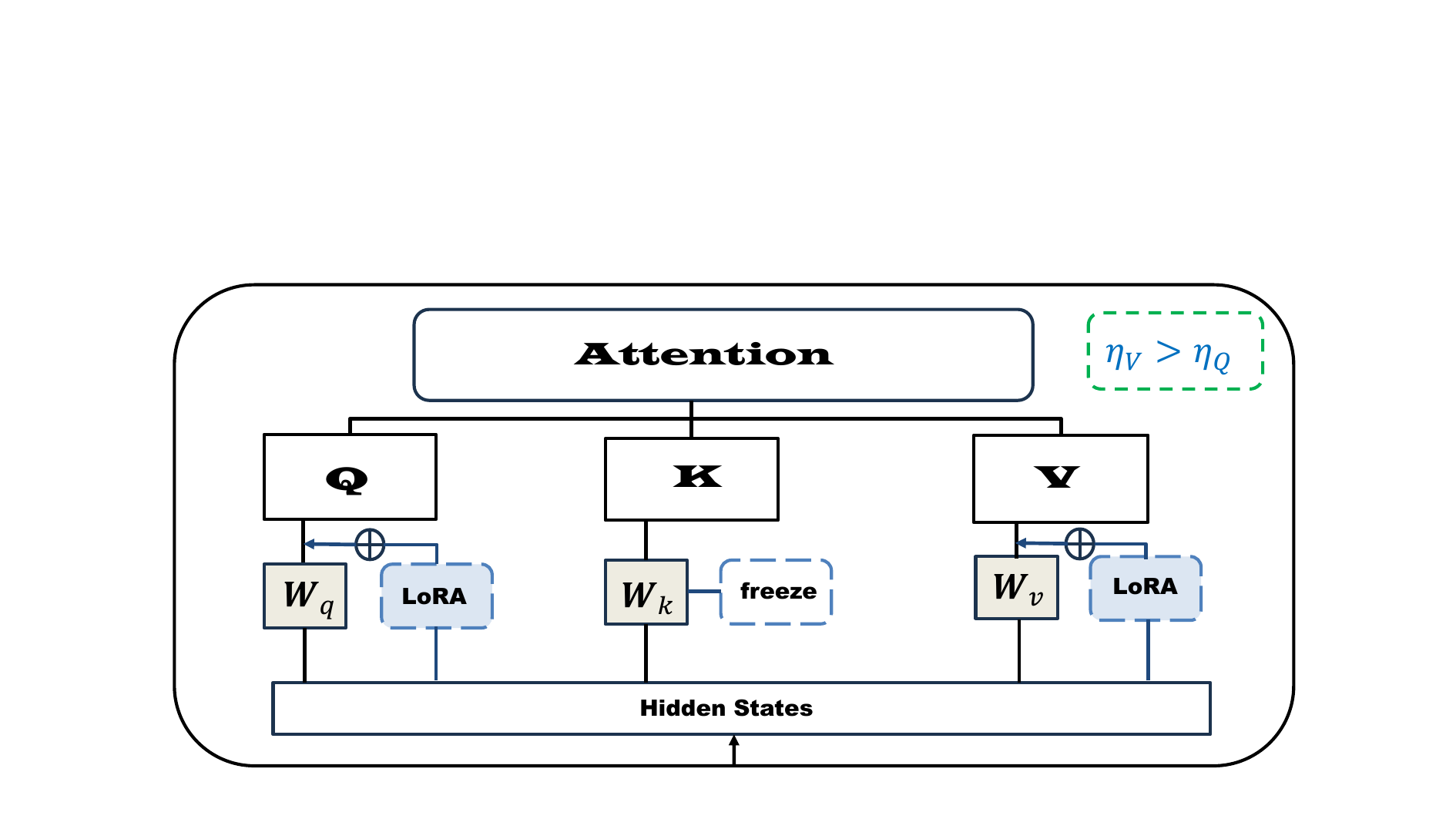}
  \caption{A brief diagram outlining how our theoretical insights guide the experiments.}\label{figure}
\end{figure}
\newpage
\subsubsection{More Challenging Evaluation}\label{more cha}
 Evaluating the model on more challenging benchmarks is essential for a comprehensive understanding of its capabilities. To address this, we follow \citep{yu2024metamath} to fine-tune the LLaMA3.1-8B  model on the MetaMathQA \citep{yu2024metamath}  dataset (the training set consists of the first 10K samples selected from the 150K MetaMathQA dataset.) and evaluate the performance on the GSM8K \citep{cobbe2021trainingverifierssolvemath} (a benchmark for mathematical problem-solving). 
 \begin{table}[ht]
\centering
\begin{tabular}{lcc}
\toprule
\textbf{Method}   & \textbf{GSM8K (100\%)}   \\ 
\midrule
Before fine-tune   & 25.55          \\ 
LoRA (QKV) $r=16, \lambda=1$ &  57.70   \\ 
LoRA (QV) $r=16, \lambda=2$ &   \textbf{59.15}  \\ 
LoRA (QV) $r=16, \lambda=4$ &    58.23 \\ 
\bottomrule
\end{tabular}
\end{table}
\subsubsection{Fine-tuning K,V}
We fine-tune only $\mathbf{W}_k$  and  $\mathbf{W}_v$ of Roberta-base, others are the same to Table \ref{table1}.
\begin{table}[ht]
\centering
\begin{tabular}{p{1cm}cccc}
\toprule
\textbf{} & \textbf{Weight Type} &  $\mathbf{W}_k,\mathbf{W}_v$ & $\mathbf{W}_q, \mathbf{W}_v$ & $\mathbf{W}_q, \mathbf{W}_k, \mathbf{W}_v$  \\
\midrule
\multirow{2}{*}{\textbf{SST2}(R)} 
 & $r=8$ &$0.920$ &$0.919$&$0.922$  \\
 & $r=16$ & $0.920$ &$0.921$&$0.923$   \\
\midrule
\multirow{2}{*}{\textbf{QNLI}(R)} 
 & $r=8$ &  $0.887$ & $0.889$ & $0.895$\\
 & $r=16$ &  $0.888$ & $0.890$& $0.890$  \\
  \midrule
\multirow{2}{*}{\textbf{QQP}(R)} 
 & $r=8$ &  $0.840$ & $0.840$ & $0.844$\\
 & $r=16$ &  $0.840$ & $0.839$& $0.844$  \\
 \midrule
 \multirow{2}{*}{\textbf{MNLI}(R)} 
 & $r=8$ &  $0.821$ & $0.820$ & $0.827$\\
 & $r=16$ &  $0.822$ & $0.824$& $0.828$  \\
\bottomrule
\end{tabular}
\end{table}
\subsubsection{Directly fine-tuning Q,K,V with lambda}\label{qkv_direct_lambda}
We fine-tuning $\mathbf{W}_q, \mathbf{W}_k, \mathbf{W}_v$ with $\lambda$ directly with the same settings in Table \ref{table2} for easy comparison, supporting one of our major claims in Theorem \ref{theorem2}.
\begin{table*}[ht]
\centering
\begin{tabular}{lccccc}
\toprule
\textbf{Method} & \textbf{Trainable \#Param (M)} & \textbf{RTE} & \textbf{STS-B} & \textbf{MRPC} & \textbf{CoLA}   \\ 
\midrule
Before Fine-tune &  0 &  45.12 & -3.18 & 66.66 & 1.09 \\
LoRA (QKV) $r=8,\lambda=1$   & 1.62   & 70.76 & 90.25 & 85.04 & 58.03  \\
LoRA (QKV) $r=8,\lambda=2$   & 1.62   & 72.92 & 90.54 & 86.76 & 58.28  \\
LoRA (QKV) $r=8,\lambda=4$   & 1.62   & 73.64 & 90.84 & 87.74 & \textbf{60.66}  \\
LoRA (QKV) $r=8,\lambda=8$   & 1.62   & 76.10 & \textbf{91.00} & \textbf{88.48} & 60.59  \\
\midrule
LoRA (QKV) $r=16,\lambda=1$     & 2.07  & 70.39  & 90.25  & 86.03  & 58.04   \\
LoRA (QKV) $r=16,\lambda=2$     & 2.07  & 72.56  & 90.36  & 86.27  & 59.81   \\
LoRA (QKV) $r=16,\lambda=4$     & 2.07  & 74.00  & 90.84  & 86.76  & 60.07   \\
LoRA (QKV) $r=16,\lambda=8$     & 2.07  & \textbf{76.97}  & 90.81  & 87.74  & 60.34   \\
\bottomrule
\end{tabular}
\end{table*}

\newpage
\section{More Related Works}

\textbf{Attention mechanism analysis.}
A key component of transformers is the attention mechanism, which dates back to \citep{graves2014generatingsequencesrecurrentneural}. Initially designed to capture long-range signals in sequential inputs by mixing individual tokens, it has also been utilized to capture general structures in input data. After the fully-attention-based language
model has appeared \citep{vaswani2017attention,brown2020languagemodelsfewshotlearners}, the research community gets interested in the functionality and benefits of the attention. For instance, transformers implicitly favor hierarchical interpretations of input sequences \citep{kharitonov2021doubtstudyinductivebiases}, the computational graphs tend to be tree-structured  \citep{meng2023locatingeditingfactualassociations,murty2023grokkinghierarchicalstructurevanilla}. Theoretical analysis of training dynamics sheds light on how to identify key tokens \citep{tian2023scansnapunderstandingtraining}, select a few relevant tokens only (which is called localized attention) or select many tokens uniformly \citep{bao2024selfattentionnetworkslocalizeqkeigenspectrum}, and learn topic structure \citep{li2023how}. Besides, considerable works \citep{ren2023incontext,ahn2023transformers,yao2024enhancingincontextlearningperformance} try to understand in-context learning capabilities from the perspective of gradient descent with attention.\\
\textbf{Scaling for neural networks.} Scaling refers to the process of enlarging a specific ingredient of a model to enhance its overall performance \citep{hoffmann2022trainingcomputeoptimallargelanguage}. The method is straightforward: extend the width or depth of a neural network towards infinity, analyze how this limit is influenced by hyperparameters like the learning rate and initialization variance during training, and then establish well-founded choices for these hyperparameters to achieve a specific objective \citep{he2016deep,schoenholz2017deepinformationpropagation,hayou2019impact,yang2020scalinglimitswideneural,yang2022tensorprogramsvtuning,he2023deeptransformersshortcutsmodifying,hayou2023infinite,yang2023tensorprogramsvifeature,loraplus}. In the theory of scaling of neural networks, one usually tracks the asymptotic behaviour of key quantities as we scale some model ingredient, it is a standard approach used to derive scaling rules for initialization \citep{schoenholz2017deepinformationpropagation}, activation function \citep{hayou2019impact},  network parametrization \citep{yang2023tensorprogramsvifeature}.  In this paper, we are interested in scaling model capacity via the width n for the fact that most state-of-the-art pre-trained models have large width. Examples of the infinite-width limit can be found in studies focused on initialization methods \citep{he2016deep,yang2020scalinglimitswideneural}, or more comprehensive approaches to network parameterization. For instance, \citet{yang2022tensorprogramsvtuning} introduced µP, a parameterization technique for neural networks that guarantees feature learning in the infinite-width limit, providing specific scaling rules for both architecture and learning rates to optimize feature learning \citep{yang2021tensor,yang2022tensorprogramsvtuning}.\\
\textbf{Parameter-efficient fine-tuning.} Fine-tuning all the parameters of a large language models, known as full fine-tuning, is highly computationally expensive. To reduce
the computational cost, various parameter-efficient fine-tuning (PEFT) methods have been proposed \citep{ding2023parameter}, which only fine-tune a small number of (extra) model
parameters. PEFT methods can be divided into two categories from the perspective of whether extra parameters are involved: (1) extra-parameter methods, freeze all of the original parameters of an LLM and insert a set of learnable parameters to optimize the model input or model layers suach as  adapter tuning \citep{houlsby2019parameter} , prompt tuning 
\citep{Lester2021ThePO} and prefix tuning \citep{prefix}; (2) intra-parameter methods freeze most of the original parameters of an LLM and only tune a small number of parameters of the LLM such as LoRA \citep{lora}. Furthermore, \citet{he2022towards} present a unified framework that
establishes connections between PEFT methods and \citet{zhu2024asymmetrylowrankadaptersfoundation}  formally identify and investigate asymmetry in the roles of low-rank adapter matrices in
LoRA fine-tuning.
\end{document}